%% file: arxiv.tex
\documentclass[11pt]{article}

\usepackage{times}
\usepackage{graphicx}
\usepackage{natbib}
\usepackage{titletoc}
\usepackage[toc,page,header]{appendix}
\usepackage[T1]{fontenc}
\usepackage{microtype}
\usepackage{amsmath,amssymb,amsthm,mathtools}
\usepackage{aliascnt}
\usepackage{enumitem}
\usepackage{xspace}
\usepackage{algorithm}
\usepackage{float}
\usepackage{flafter}
\usepackage{placeins}
\usepackage{needspace}
\usepackage[noend]{algpseudocode}
\usepackage{xcolor}
\usepackage[colorlinks=true,linkcolor=blue,citecolor=blue,urlcolor=black,linktocpage=true]{hyperref}
\usepackage[nameinlink,capitalise]{cleveref}

\algnewcommand{\algorithmicdefine}{\textbf{Define:}}
\algnewcommand{\Define}{\item[\algorithmicdefine]}
\algnewcommand{\algorithmicparameters}{\textbf{Parameters:}}
\algnewcommand{\Parameters}{\item[\algorithmicparameters]}

\makeatletter
\providecommand{\theHALG@line}{}
\renewcommand{\theHALG@line}{\thealgorithm.\arabic{ALG@line}}
\makeatother

\newcommand{\compactvspace}[1]{}

\allowdisplaybreaks

\newtheorem{theorem}{Theorem}[section]

\newaliascnt{lemma}{theorem}
\newtheorem{lemma}[lemma]{Lemma}
\aliascntresetthe{lemma}
\crefname{lemma}{Lemma}{Lemmas}
\Crefname{lemma}{Lemma}{Lemmas}

\newaliascnt{corollary}{theorem}

\aliascntresetthe{corollary}
\crefname{corollary}{Corollary}{Corollaries}
\Crefname{corollary}{Corollary}{Corollaries}

\newaliascnt{proposition}{theorem}

\aliascntresetthe{proposition}
\crefname{proposition}{Proposition}{Propositions}
\Crefname{proposition}{Proposition}{Propositions}

\newaliascnt{assumption}{theorem}

\aliascntresetthe{assumption}
\crefname{assumption}{Assumption}{Assumptions}
\Crefname{assumption}{Assumption}{Assumptions}

\newcommand{\R}{\mathbb{R}}
\newcommand{\E}{\mathbb{E}}
\newcommand{\Pp}{\mathbb{P}}
\newcommand{\cS}{\mathcal{S}}
\newcommand{\cA}{\mathcal{A}}
\newcommand{\cD}{\mathcal{D}}
\newcommand{\cK}{\mathcal{K}}
\newcommand{\cKhat}{\widehat{\mathcal{K}}}
\newcommand{\cO}{\mathcal{O}}
\newcommand{\cF}{\mathcal{F}}

\newcommand{\Ball}{\mathbb{B}}
\newcommand{\bI}{\mathbf{I}}
\newcommand{\xbar}{\overline{x}}
\newcommand{\xhat}{\widehat{x}}
\newcommand{\phihat}{\widehat{\phi}}
\newcommand{\ip}[2]{\left\langle #1,#2\right\rangle}
\newcommand{\inner}[1]{\langle #1\rangle}

\newcommand{\norm}[1]{\left\lVert #1\right\rVert}

\newcommand{\ind}{\mathbf{1}}
\newcommand{\vol}{\operatorname{Vol}}

\newcommand{\poly}{\operatorname{poly}}
\newcommand{\Reg}{\operatorname{Reg}}
\newcommand{\op}{\mathrm{op}}
\newcommand{\feat}{\mathrm{ft}}
\newcommand{\pl}{\mathrm{pl}}

\newcommand{\Alg}[1]{\textnormal{\scshape #1}}
\newcommand{\moment}{\Alg{MomentEstimate}\xspace}
\newcommand{\certified}{\Alg{ApproxLO}\xspace}
\newcommand{\compile}{\Alg{ApproxSEP}\xspace}

\DeclareMathOperator*{\argmin}{arg\,min}
\DeclareMathOperator{\CLIP}{CLIP}
\DeclareMathOperator{\Proj}{Proj}

\title{A Computationally Efficient and Rate-Optimal Algorithm 
for Adversarial Linear MDPs 
with (Aggregate) Bandit Feedback}
\author{%
    Haolin Liu\thanks{Email: \href{mailto:srs8rh@virginia.edu}{\texttt{srs8rh@virginia.edu}}.} \\
    \scalebox{0.9}{University of Virginia}
    \and
    Chen-Yu Wei\thanks{Email: \href{mailto:chenyu.wei@virginia.edu}{\texttt{chenyu.wei@virginia.edu}}.} \\
    \scalebox{0.9}{University of Virginia}
    \and
    Hanbin Zhou\thanks{Email: \href{mailto:hanbin.zhou@connect.hkust-gz.edu.cn}{\texttt{hanbin.zhou@connect.hkust-gz.edu.cn}}.} \\
    \scalebox{0.9}{Hong Kong University of}\\\scalebox{0.9}{Science and Technology}
}
\date{}

\begin{document}
\addtocontents{toc}{\protect\setcounter{tocdepth}{-1}}
\maketitle

\begin{abstract}
We study episodic linear Markov decision processes with unknown transitions, adversarial losses, and bandit feedback. The state-of-the-art result for this setting \citep{liu2024linear} either achieves $\sqrt{K}$ regret with a computationally inefficient algorithm or achieves suboptimal $K^{3/4}$ regret, where $K$ is the number of episodes. Our work gives the first $\sqrt{K}$ regret with a polynomial-time algorithm. 

Our method reduces adversarial linear MDPs to adversarial linear bandits whose action set $\cK$ consists of policies' expected feature vectors. 
The central challenge is that $\cK$ is unknown and can be estimated only through exploration. We address this challenge by constructing an \emph{approximate separation oracle} for~$\cK$ that either generates executable policies from queried features, or refines an outer approximation of~$\cK$. This approach draws inspiration from the reduction of \citet{vanerven2025reduction} for linear contextual bandits and the oracle-efficient linear bandit algorithm of \citet{ito2019oracle}, but additionally deals with the imprecision of the oracle. The resulting algorithm requires only aggregate loss feedback, and to our knowledge. This is the first polynomial-time algorithm for adversarial MDPs with aggregate feedback beyond the tabular case. 
\end{abstract}

\input{sections/main_01_intro}

\input{sections/main_02_model}
\input{sections/main_03_algorithm}
\input{sections/main_04_compiler}
\input{sections/main_06_computation}

\label{page:main-text-end}

\clearpage

\bibliography{references}
\bibliographystyle{plainnat}
\clearpage

\appendix
\appendixpage
\addtocontents{toc}{\protect\setcounter{tocdepth}{2}}
\input{appendices/contents}
\input{appendices/related_work}
\FloatBarrier
\input{appendices/03_master_algorithm}
\FloatBarrier
\input{appendices/E_outer_details}
\FloatBarrier
\input{appendices/05_support}
\FloatBarrier
\input{appendices/07_computation}

\FloatBarrier
\input{appendices/06_regret}

\end{document}

%% file: sections/main_01_intro.tex
\section{Introduction}
\label{sec:paper-introduction}

Achieving rate-optimal regret with polynomial computation is a central challenge in adversarial linear Markov decision processes (MDPs) with bandit feedback and unknown transitions. Starting with the $K^{14/15}$ regret bound of \citet{luo2021policy}, subsequent works improved the rate to $K^{8/9}$, $K^{6/7}$, and $K^{3/4}$ \citep{dai2023refined,sherman2023improved,liu2024linear}, respectively. Meanwhile, \citet{liu2024linear} obtained the optimal $\sqrt K$ rate using a computationally inefficient method. Whether $\sqrt K$ regret can be achieved with an efficient (polynomial-time) algorithm has remained open.

Our work closes this gap by giving the first computationally efficient algorithm with $\poly(d,H)\sqrt{K}$ regret, where $d$ is the feature dimension and $H$ is the horizon length. In fact, our work handles the more challenging \emph{aggregate feedback} model: the learner, despite observing the state sequence on the trajectory, only observes the total episode loss rather than the individual step losses.  Prior work for the aggregate feedback model focuses either on the tabular case \citep{cohen2021, lancewicki2025, ito2025} or the stochastic-loss setting \citep{cassel2024near, chen2026outcome}. To our knowledge, our work gives the first polynomial-time algorithm for aggregate feedback model with adversarial losses beyond the tabular case.

Our approach builds on a reduction from linear MDPs to linear bandits. For each policy $\pi$, we stack its expected state-action features across layers as a \emph{policy-feature vector} $\phi^\pi$. Because the expected episode loss is linear in $\phi^\pi$, the MDP can be viewed as a linear bandit whose action set consists of policy-feature vectors. This action set is unknown because $\phi^\pi$ depends on the unknown transition kernel.  Existing methods based on this reduction \citep{kong2024improved,liu2024linear} discretize the policy class, estimate the policy-feature of every policy, and apply a linear-bandit algorithm to the resulting finite action set. While this achieves the optimal $\sqrt K$ regret rate \citep{liu2024linear}, their computation scales with an exponentially large policy class.

Under the same reduction idea, \citet{vanerven2025reduction} develop an efficient reduction from linear contextual bandits to linear bandits. They construct a convex approximation $\cKhat$ of $\cK$ from past i.i.d. contexts and construct a linear optimization oracle for $\cKhat$. This oracle allows them to run a linear bandit algorithm (continuous exponential weights) over $\cKhat$ without explicitly representing the set. Their linear contextual bandit setting is a special case of linear MDP when $H=1$. 

Although the approach of \citet{vanerven2025reduction} suggests a promising route for linear MDPs, a direct extension faces a major obstacle: the straightforward estimate $\cKhat$ for linear MDP may not be convex due to the recursive structure of MDPs. Therefore, the log-concave sampling underlying continuous exponential weights no longer applies, and we also lose an exact optimization oracle over $\cKhat$. To address this, we leverage the idea of \citet{ito2019oracle}: we run continuous exponential weights over an outer convex set $\cO\supseteq\cK$ represented as an intersection of halfspaces (cuts).   In \cite{ito2019oracle}, whenever a point queried from $\cO$ lies outside $\cK$, an exact oracle generates a separating cut to refine $\cO$. 
Unlike in their setting, however, we lack an exact oracle for~$\cK$. Thus, we construct cuts that exclude the queried point only when doing so preserves  $\cO\supseteq\cK$ with high probability. We remark that while \citet{ito2019oracle} propose this mechanism to reduce oracle complexity, we use it for a completely different purpose of handling unknown action sets.%

When the algorithm does not return a cut, it decomposes the queried point into a distribution over policies whose average policy feature approximates the query point. This will be the policy distribution played by the learner in that episode. Together, the two cases form a \emph{approximate separation procedure}: it either constructs a finite policy mixture approximating the query, or returns a separating cut that refines $\cO$. We implement it using a Frank--Wolfe-style procedure. %
Besides a distribution over policies, our procedure simultaneously generates uncertainty measures, which will be crucial in constructing a bias-correction bonus fed to the learning procedure. %

The paper is organized as follows. In \cref{sec:paper-model} we describe the problem setting. In \cref{sec: algorithm overview}, we give a high-level overview of the main algorithm. In \cref{sec:paper-algorithm}, we provide a sketch of regret analysis. In \cref{sec:paper-compiler}, we focus on the design of the approximate separation procedure. In \cref{sec:paper-computation}, we analyze the computational complexity of the algorithm.

%% file: sections/main_02_model.tex
\section{Preliminaries}
\label{sec:paper-model}
We consider a finite-horizon Markov decision process with horizon $H$, layer-wise state sets $\cS_1,\ldots,\cS_{H+1}$, a finite nonempty action set $\cA$ with $A=|\cA|$, and a fixed initial state $s_1\in\cS_1$. For each layer $h$, the transition kernel $P_h(\cdot\mid s,a)$ is a probability distribution on $\cS_{h+1}$. The kernels are unknown to the learner and fixed across episodes. A Markov policy $\pi=(\pi_h)_{h=1}^H$ consists of maps $\pi_h:\cS_h\to\Delta(\cA)$, where $\Delta(\cA)$ denotes the probability simplex over actions.

The learner interacts with the MDP for $K$ episodes. Before interaction, an oblivious adversary fixes loss functions $\ell_{k,h}:\cS_h\times\cA\to[0,1]$ for all episodes and layers. At the start of episode $k$, the learner selects a possibly random Markov policy $\pi_k$ using past observations. Starting from $S_{k,1}=s_1$, the trajectory is generated by
\( A_{k,h}\sim\pi_{k,h}(\cdot\mid S_{k,h}),
 S_{k,h+1}\sim P_h(\cdot\mid S_{k,h},A_{k,h}),
\) for every $h$. We assume the loss feedback observed by the learner consists only of the total episode loss
\(
 L_k=\sum_{h=1}^H\ell_{k,h}(S_{k,h},A_{k,h}),
\)
revealed at the end of the episode. We refer to this observation model as \emph{aggregate bandit feedback}. 

For a Markov policy $\pi$, define its expected loss in episode $k$ by
\(
 V_k^\pi=\E_{P,\pi}\big[\sum_{h=1}^H\ell_{k,h}(S_h,A_h)\big].
\)
Here $\E_{P,\pi}$ denotes expectation over the trajectory generated by following policy $\pi$ under transition kernels $P$.
The objective we consider is to minimize regret relative to the best fixed Markov policy,
\[
 \Reg_K=\E\!\left[\sum_{k=1}^K V_k^{\pi_k}\right]
       -\min_{\pi\text{ Markov}}\sum_{k=1}^K V_k^\pi.
\]
\paragraph{Linear MDPs.}
We assume that transitions and losses are linear in known feature maps $\phi_h:\cS_h\times\cA\to\R^d$. Specifically, there exist unknown vectors $\psi_h(s')\in\R^d$ and $\theta_{k,h}\in\R^d$ such that, for every $k,h,s,a,s'$ with $s\in\cS_h$ and $s'\in\cS_{h+1}$,
\[
 P_h(s'\mid s,a)=\phi_h(s,a)^\top\psi_h(s'),
 \qquad
 \ell_{k,h}(s,a)=\phi_h(s,a)^\top\theta_{k,h}.
\]
The transition coefficients $\psi_h$ are fixed, while the loss coefficients $\theta_{k,h}$ may vary across episodes. We assume $\norm{\phi_h(s,a)}_2\le1$, $\norm{\theta_{k,h}}_2\le\sqrt d$, and $\big\|\sum_{s'\in\cS_{h+1}} \psi_h(s')f(s')\big\|_2\leq \sqrt{d}\|f\|_\infty$ for any $f: \mathcal{S}_{h+1}\to \mathbb{R}$ without loss of generality \citep{jin2020, wei2021learning}. For any $f:\cS_{h+1}\to\R$, define
\(
 w_h(f)=\sum_{s'\in\cS_{h+1}}\psi_h(s')f(s')
\)
and 
\(
 P_hf(s,a)=\E_{s'\sim P_h(\cdot\mid s,a)}[f(s')]
          =\phi_h(s,a)^\top w_h(f).
\)

Let $p=dH$ be the expected-feature dimension, obtained by stacking the $d$-dimensional features across all $H$ layers. Set $\theta_k=(\theta_{k,1},\ldots,\theta_{k,H})$, and let $\phi^\pi=(\phi_1^\pi,\ldots,\phi_H^\pi)$ with $\phi_h^\pi=\E_{P,\pi}[\phi_h(S_h,A_h)]$. Then $\norm{\phi^\pi}_2\le\sqrt H$, $\norm{\theta_k}_2\le\sqrt{dH}$, and
\(
 \E[L_k\mid\pi_k=\pi]=\langle\phi^\pi,\theta_k\rangle=V_k^\pi.
\)
Let $\cK=\{\phi^\pi:\pi\text{ is a Markov policy}\}$ be the set of feasible policy-feature vectors. The policy-feature reduction of \citet{liu2024linear} relates regret to linear losses over this set through
\begin{align}
\Reg_K = \E\left[\sum_{k=1}^K \left\langle \phi^{\pi_k}, \theta_k \right \rangle -\min_{u \in \mathcal{K}}\sum_{k=1}^K \left\langle u, \theta_k \right \rangle \right]
\label{eq:reg-red}
\end{align}
The set $\cK$ depends on the unknown transitions. The reduction therefore yields an adversarial linear bandit problem over an unknown feasible set. The algorithm developed in \cref{sec:paper-algorithm,sec:paper-compiler} achieves the following guarantee.

\begin{theorem}[Main result]
\label{thm:paper-main}
For adversarial linear MDPs with aggregate bandit feedback, Algorithm~\ref{alg:paper-core}, with the complete implementation in Algorithm~\ref{alg:master another}, achieves $\Reg_K= \widetilde O(d^5H^{9/2}\sqrt K)$. Total computation is polynomial in $K,d,H,A$.
\end{theorem}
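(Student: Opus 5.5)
Via the policy-feature reduction \eqref{eq:reg-red}, I will treat the problem as adversarial linear bandits over the unknown set $\cK\subset\R^p$ with $p=dH$. The proof then decomposes regret into four pieces. First, the regret of continuous exponential weights run over the outer convex set $\cO_k\supseteq\cK$. Second, the approximation error between the queried point $x_k\in\cO_k$ and the realized mixture feature $\sum_j q_{k,j}\phi^{\pi_{k,j}}$ returned by the approximate separation procedure. Third, the bias of the loss estimator built from the aggregate feedback $L_k$. Fourth, the bonus terms used to cancel this bias.

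The first step is to establish a high-probability invariant, namely $\cK\subseteq\cO_k$ for all $k$. I would obtain this from confidence sets for $w_h(f)$, estimated by ridge regression on the collected trajectories, together with a union bound over a cover of the relevant value-function class. This guarantees that every cut added by the procedure is valid, so the comparator $u^\star\in\cK$ stays feasible. The exponential-weights analysis over $\cO_k$, with an unbiased-style estimator $\hat\theta_k=\Sigma_k^{-1}(x_k-\bar x_k)L_k$ and the log-concave sampling guarantee, then gives the standard $\widetilde O(p\sqrt{K\cdot H^2})$-type bound. To make the potential argument robust to the shrinking domain, I would use the fact that restricting the exponential-weights density to a subset only helps the comparator term, as in \citet{ito2019oracle}.

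The second step handles the play-versus-query gap. When no cut is produced, the Frank--Wolfe decomposition returns policies whose true average feature differs from $x_k$ by at most an uncertainty term $b_k$, expressed through $\sum_h\E_{\pi}\norm{\phi_h}_{\Lambda_{k,h}^{-1}}$. I would bound the bias $|\inner{x_k-\phi^{\pi_k},\theta_k}|$ and the estimator bias by $\sqrt{dH}\,b_k$. These terms would then be absorbed by feeding a bonus $-\beta b_k(\cdot)$ into the loss, in the optimism-via-bonus style of \citet{liu2024linear}. The cumulative bonus would be controlled by an elliptical-potential lemma, giving $\sum_k b_k=\widetilde O(\poly(d,H)\sqrt K)$. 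This requires a dilated-bonus argument showing that the bonus is itself (approximately) linear in policy features, so it can be incorporated into the linear bandit over $\cO_k$.

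The third step counts cut episodes. Each cut removes a region around a point that has a non-negligible exponential-weights probability, so a volume or centroid argument bounds the number of cuts by $\widetilde O(p^2)$ per epoch. These episodes therefore contribute only lower-order regret, and polynomial computation follows from polynomially many cuts, polynomial-time log-concave sampling over a polytope, and Frank--Wolfe iterations that each call dynamic programming with cost $\poly(d,H,A)$.

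I expect the main obstacle to be the second step. The bias created by the imprecise oracle has to be matched by a bonus that the exponential-weights learner can actually optimize. At the same time, the variance of the aggregate-feedback estimator scales with $H^2$, and balancing the bonus magnitude $\beta$ against this variance is what produces the $d^5H^{9/2}$ dependence. I would first try setting $\beta\approx\poly(d,H)$ and verifying that the resulting bonus terms stay bounded by $\sqrt{dH}$, so that the exponential-weights stability condition $\eta|\hat\ell|\le1$ continues to hold.
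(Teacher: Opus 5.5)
There is a genuine gap in your second step. You bound the estimator bias by $\sqrt{dH}\,b_k$. However, when $\E[L_k\mid x_k]$ is the loss of the compiled mixture rather than $\ip{x_k}{\theta_k}$, the misspecification in an estimator of the form $\Sigma_k^{-1}(x_k-\mu_k)L_k$ is multiplied by $\Sigma_k^{-1}$. Its contribution to the regret against $u$ is then of order $p\,\varepsilon_k\norm{\mu_k-u}_{\Sigma_k^{-1}}$; \cref{lem:corrected-estimator} plus Cauchy--Schwarz give $\frac{5p\varepsilon_k}{2c_0}\norm{\mu_k-u}_{\Sigma_k^{-1}}$. This term depends on the comparator and is unbounded as $q_k$ concentrates away from $u$.

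The missing idea is a correction that is \emph{linear in $x$}. For the entropic barrier, $\nabla F_{\cO_k}(\mu_k)=-\eta Z_{k-1}$ and $\norm{\mu_k-u}_{\Sigma_k^{-1}}\le4\ip{\mu_k-u}{\nabla F_{\cO_k}(\mu_k)}+24p$. So subtracting $B_k=\frac{10\eta p\varepsilon_k}{c_0}Z_{k-1}$ from the loss estimate reduces the bias to $O(p^2\varepsilon_k)$ (\cref{lem:bias-bonus-compensation}).

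Your bonus $-\beta b_k(\pi)$, with $b_k(\pi)\approx\sum_h\E_\pi\norm{\phi_h}_{\Lambda_{k,h}^{-1}}$, cannot replace this. It is a function of the policy, not of a point of $\R^p$, so it does not give a linear loss on $\cO_k$. That linearity is what log-concave sampling and the entropic-barrier potential analysis require. Nothing makes $b_k(\pi)$ ``approximately linear in policy features,'' and dilated bonuses address a different problem. This optimism-style bonus also targets the wrong term: it offsets the \emph{comparator's} feature-estimation error, which never arises here. Cuts are made only when certified, so $u\in\cK$ enters through its exact feature, and all approximation error is on the learner's side. There it is summed directly: $\sum_k\varepsilon_k\le\sqrt{K\sum_k\varepsilon_k^2}$. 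The bound $\E\sum_k\varepsilon_k^2=\widetilde O(d^6H^5)$ follows from the uniform draw of $(I_k,J_k)$ over all $2pM$ compiled policies, the certificate bound \eqref{eq:policy-certificate-bound}, and the elliptical potential lemma. The rate $d^5H^{9/2}$ is $d^2H^2\sqrt{d^6H^5}$, not a balance between $\beta$ and the $H^2$ variance.

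Your sampling and cut design also fails. Suppose you sample one $x_k\sim q_k$ and compile only that point. After a cut and a resample, the played point is distributed as $q_k$ \emph{conditioned on \compile accepting it}, so the mean and covariance identities behind your estimator no longer hold. A cut through a tail sample may also remove negligible mass, so no cut count follows. Moreover, cuts cannot consume episodes. The paper only shows $\widetilde O(p+\sqrt H K^{3/2})$ cuts per episode (\cref{lem:local-cut-progress}), far too many to charge $H$ regret each. It performs them in the offline restart loop instead, which is possible because \compile uses only past data.

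The paper compiles the $2p$ deterministic targets $\widehat\mu_k\pm c_0\widehat\Sigma_k^{1/2}e_i$ \emph{before} the draw. The uniform distribution over targets therefore has mean exactly $\widehat\mu_k$ and covariance $\frac{c_0^2}{p}\widehat\Sigma_k$. These targets also lie in $\mu_k+\frac1{2e}\Sigma_k^{1/2}\Ball_2^p$, so every cut removes a $\frac1{2e}$ fraction of the current mass (\cref{lem:estimated-ellipsoid-containment}).

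Finally, your invariant must be $\cK_\zeta\subseteq\cO_k$, not $\cK\subseteq\cO_k$. The set $\cK$ may have empty interior, and the potential argument needs $u+\zeta\Ball_2^p\subseteq\cO_K$ to get $\int_{\cO_K}e^{-\eta Z_K^\top x}\,dx\ge e^{-\eta Z_K^\top u}\vol(\zeta\Ball_2^p)$. This is why the cut threshold in \compile carries the extra $\zeta\norm{c_j}_2$.
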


\paragraph{Notation.}
For a positive integer $m$, write $[m]=\{1,\ldots,m\}$, let $\bI_m$ be the $m\times m$ identity matrix, and denote the Euclidean unit ball by $\Ball_2^m=\{x\in\R^m:\norm{x}_2\le1\}$. The indicator $\ind\{E\}$ equals one when $E$ holds and zero otherwise. For $\Sigma\succeq0$, $\norm{x}_{\Sigma}\triangleq\sqrt{x^\top\Sigma x}$. For a nonempty closed convex set $C\subseteq\R^m$, write $\Proj_C(x)=\argmin_{z\in C}\norm{z-x}_2^2$. For scalars $a\le b$, define the clipping operator by
\(
 \CLIP_{[a,b]}(t)=\min\{b,\max\{a,t\}\}.
\)
For vectors, $\CLIP_{[a,b]^m}$ applies this operator to each coordinate. For functions, clipping is applied pointwise. Throughout, $\widetilde O$ hides logarithmic factors in $K,d,H,A$.

%% file: sections/main_03_algorithm.tex
\section{Algorithm Overview}\label{sec: algorithm overview}
The main algorithm is \cref{alg:paper-core}. The algorithm maintains an outer approximation $\cO\supset \cK$ and a cumulative loss estimate $Z_{k-1}\in\R^p$. Since $\cK$ may have empty interior, we actually require $\cO$ to contain the enlarged set
$
 \cK_\zeta=\cK+\zeta\Ball_2^p,
$
for $\zeta=1/\sqrt{K}$ to ensure that the linear bandit algorithm is run over a set with non-zero volume. Each episode consists of the following steps:  

\begin{algorithm}[t]
\caption{Continuous exponential weights with policy compilation and adaptive cuts}
\label{alg:paper-core}
\begin{algorithmic}[1]
\Require $c_0=\frac{1}{16e}$, $\zeta=K^{-1/2}$,
$R=\sqrt H+\zeta$, $M=K$, and
$\eta\in (0, \frac{c_0}{128\sqrt{d^3H^4}}]$. 
\State $Z_0\gets0$; $\cO\gets R\Ball_2^p$; $\cD_{1,h}\gets\varnothing$ for all $h\in[H]$
\For{$k=1,\ldots,K$}
  \Repeat
    \State \textit{restart}$\gets\Alg{false}$
    \State Define $q_k(x)\propto\exp(-\eta Z_{k-1}^\top x)\ind\{x\in\cO\}$ \label{line:master-exponential-weights}
    \State \mbox{Estimate mean and covariance: $(\widehat\mu_k,\widehat\Sigma_k)\gets\Alg{MomentEstimate}(\cO,Z_{k-1})$ \label{line: moment estimate}(\cref{app:moment-details})}
    \State Let $\{Y_{k,i}\}_{i=1}^{2p}$ be the set $\left\{ \widehat\mu_k + c_0 \widehat\Sigma_k^{1/2} v:~ v \in \{\pm e_1, \pm e_2, \ldots, \pm e_p\} \right\}$. \label{line:master-targets}
    \For{$i=1,\ldots,2p$}
      \State $\mathsf{reply}\gets\compile(Y_{k,i},(\cD_{k,h})_{h=1}^H,M,\zeta)$ (\cref{alg:paper-compiler}) \label{line:reply compile}
      \If{$\mathsf{reply}$ is a cut $(c,b)$}
        \State $\cO\gets\cO\cap\{x:c^\top x\ge b\}$
        \State \textit{restart}$\gets\Alg{true}$; \textbf{break}  \label{line: restart line}
      \Else
        \State $((\pi_{k,i,j})_{j=1}^M,\varepsilon_{k,i})\gets\mathsf{reply}$ 
      \EndIf
    \EndFor
  \Until{\textit{restart}$=\Alg{false}$}
  \State Draw $(I_k,J_k)\sim\operatorname{Unif}([2p]\times[M])$ and execute $\pi_k = \pi_{k,I_k, J_k}$.  \label{line:master-draw}
  \State Observe feature vectors along the trajectory $(\phi_{k,h})_{h=1}^H$ and the total loss $L_k$.  \label{line:master-rollout}
  \State $\widehat\theta_k\gets \left(\zeta^2 \mathbf{I}_p + \frac{c_0^2}{p}\widehat\Sigma_k\right)^{-1}(Y_{k,I_k}-\widehat\mu_k)L_k$ \label{line:master-estimate}
  \State $Z_k\gets Z_{k-1}+\widehat\theta_k-B_k$ where $B_k=\frac{10\eta p \varepsilon_k}{c_0}Z_{k-1}$ with $\varepsilon_k=\sqrt{\frac1{2p}\sum_{i=1}^{2p}\varepsilon_{k,i}^2}$.
     \label{line:master-score}
  \State $\cD_{k+1,h}\gets\cD_{k,h}\cup\{(\phi_{k,h},S_{k,h+1})\}$ for all $h\in[H]$ \label{line:master-history}
\EndFor
\end{algorithmic}
\end{algorithm}

\paragraph{Step 1. Construct a finite-support exploration distribution. }   

The continuous exponential-weight distribution over $\cO$ has density $q_k(x) \propto \exp(-\eta Z_{k-1}^\top x)\ind\{x\in\cO\}$. Let $\widehat\mu_k\in\R^p$ and
$\widehat\Sigma_k\in\R^{p\times p}$ be estimates of the mean and
covariance of $q_k$. Instead of sampling a point directly from $q_k$,
we construct a uniform distribution over the following $2p$ points:
\begin{align}
     \left\{ \widehat\mu_k + c_0 \widehat\Sigma_k^{1/2} v:~ v \in \{\pm e_1, \pm e_2, \ldots, \pm e_p\} \right\} \label{eq: finite support pints} 
\end{align}
where $e_1, \ldots, e_p$ are the standard basis vectors of $\R^p$, $c_0>0$ is a universal constant. This uniform distribution has mean $\widehat\mu_k$ and covariance $(c_0^2/p)\widehat\Sigma_k$. This sampling scheme is similar to that used in SCRiBLe \citep{abernethy2009competing}. 

We aim for finite support because Step~2 explicitly enumerates and processes every possible support point before the learner samples a policy. 

\paragraph{Step 2. Compile each support point or generate a cut. } 
Denote the $2p$ points in \eqref{eq: finite support pints} as $\{Y_{k,i}\}_{i=1}^{2p}$. For each $i\in[2p]$, \compile returns exactly one of the following:
\begin{itemize}[
    leftmargin=35pt,
    topsep=0pt,
    itemsep=0pt,
    parsep=0pt,
    partopsep=0pt
]
    \item[Case~1.] A set of policies $\{\pi_{k,i,j}\}_{j=1}^M$ and an error certificate $\varepsilon_{k,i}$ such that the average policy feature $\frac{1}{M}\sum_{j=1}^M\phi^{\pi_{k,i,j}}$ approximates $Y_{k,i}$, with approximation error quantified by $\varepsilon_{k,i}$.       
    \item[Case~2.] A cut $(c,b)$ such that $c^\top Y_{k,i}<b$ while, with high probability, $c^\top x \geq b$ for all $x\in \cK_\zeta$. 
\end{itemize}
If the return is Case~1 for all $2p$ points, the algorithm will proceed to Step~3; otherwise, the cut will be used to refine $\cO$, and the algorithm will go back to Step~1 to generate new points $\{Y_{k,i}\}_{i=1}^{2p}$ with the refined $\cO$. 
At a high level, \compile implements an \emph{approximate separation oracle} for $\cK_\zeta$. 
The approximation error $\varepsilon_{k,i}$ is constructed with confidence bounds from past
transition data. \compile returns a cut only when the confidence bounds certify that the queried point is outside $\cK_\zeta$. We will also show that: with high probability, the number of cuts produced by \compile is upper bounded by $\poly(p,K)$ during the learning procedure, so the algorithm will finally proceed to the next step.

\paragraph{Step 3. Sample a policy. } If \compile returns Case~1 for every $i\in[2p]$, the learner draws $(I_k,J_k)$ uniformly from $[2p]\times[M]$ and executes $\pi_{k,I_k,J_k}$. Conditional on the completed preparation and $I_k$, averaging the policy feature over $J_k$ gives $\frac1M\sum_{j=1}^M\phi^{\pi_{k,I_k,j}}$, which approximates $Y_{k,I_k}$ within $\varepsilon_{k,I_k}$ by the guarantee in Case~1. Therefore, the overall algorithm approximates a standard linear bandit algorithm with the sampling scheme \eqref{eq: finite support pints}. After execution, the constructed loss estimator $\widehat\theta_k$ and a bias-correction bonus $B_k$ are used to update~$Z_k$.

\section{Sketch of Regret Analysis and Bonus Design}
\label{sec:paper-algorithm}

Let $\mu_k=\E_{X\sim q_k}[X]$ and $\Sigma_k=\operatorname{Cov}_{X\sim q_k}(X)$
denote the exact mean and covariance of $q_k$ after preparation of episode $k$;
$\widehat\mu_k$ and $\widehat\Sigma_k$ are their estimates.
Write $\E_k[\cdot]=\E[\cdot\mid\cF_k]$, where $\cF_k$ is the information
available after preparation and before drawing $(I_k,J_k)$.

The key step is the \compile procedure in Step~2 in \cref{sec: algorithm overview}. In this section, we provide a skeleton of regret analysis. Then in \cref{sec:paper-compiler}, we provide detailed explanation for \compile. %

Recall that in episode $k$, \cref{alg:paper-core} forms $2p$ \emph{target} points (Line~\ref{line:master-targets} of \cref{alg:paper-core}). %
For each target $Y_{k,i}\in \R^p$, the \compile procedure (Line~\ref{line:reply compile} of \cref{alg:paper-core}) either returns a cut or a list of $M$ policies and a certificate $\varepsilon_{k,i}$. Let $x_{k,i}=\frac{1}{M}\sum_{j=1}^M\phi^{\pi_{k,i,j}}$. Then \cref{lem: approximate error lem} shows $\|x_{k,i} - Y_{k,i}\|_2\leq \varepsilon_{k,i}$. Further define $\xbar_k=\frac{1}{2p}\sum_{i=1}^{2p}x_{k,i}$.  Since $\widehat\mu_k=\frac{1}{2p}\sum_{i=1}^{2p}Y_{k,i}$, Jensen's inequality gives
\begin{align}
    \textstyle \norm{\xbar_{k} - \widehat\mu_k}_2 \leq \varepsilon_{k} := \sqrt{\frac1{2p}\sum_{i=1}^{2p}\varepsilon_{k,i}^2}.
    \label{eq: xY error}
\end{align}
Essentially, $\widehat\mu_k$ is the mean feature suggested by the underlying linear bandit algorithm, while $\xbar_{k}$ is the true expected feature of the policy executed by the learner. Their mismatch indicates that the underlying linear bandit algorithm is facing \emph{misspecified} or \emph{corrupted} feedback.

\paragraph{Quantifying bias and injecting bias-correction bonus.} To handle misspecification, we inject bias-correction bonus $B_k$ into the continuous exponential weight update  \citep{liu2024corruption,vanerven2025reduction}. To understand the design of $B_k$, we perform the following analysis. When the regret comparator has policy feature $u$, the true regret in episode $k$ is $\inner{\xbar_k-u, \theta_k}$, while the regret seen by the exponential weight algorithm is $\inner{\mu_k -u, \widehat\theta_k - B_k}$.  Their difference is
\begin{align}
    \underbrace{\inner{\xbar_k - u, \theta_k}}_{\text{true regret}} - \underbrace{\inner{\mu_k - u, \E_k[\widehat\theta_k]- B_k } }_{\text{perceived regret}} 
    &= \underbrace{\inner{\xbar_k - \mu_k, \theta_k}}_{\text{feature error}} + \underbrace{\inner{\mu_k - u, \theta_k - \E_k[\widehat\theta_k]}}_{\text{loss estimator bias}} + \underbrace{\inner{\mu_k-u, B_k}}_{\text{correction term}}.  \label{eq: bias decompose} 
\end{align}
The first term (i.e., feature error) comes from the mismatch between the estimated and true policy feature. It is upper bounded by $\sqrt{p}\big(\varepsilon_{k} + \|\widehat\mu_k - \mu_k\|_2\big)$ by \eqref{eq: xY error} with the assumption $\|\theta_k\|_2 \leq \sqrt{p}$. The term $\|\widehat\mu_k-\mu_k\|_2$ can be made negligible by running \moment with high enough accuracy. Therefore, this part is dominated by $\sqrt{p}\varepsilon_k$.  

The second term (i.e., loss estimator bias) comes from the bias of $\widehat\theta_k$. It can be upper bounded as $\inner{\mu_k - u, \theta_k - \E_k[\widehat\theta_k]}\leq \norm{\mu_k  - u}_{\Sigma_k^{-1}}\|\theta_k - \E_k[\widehat\theta_k]\|_{\Sigma_k}$. \Cref{lem:corrected-estimator} gives $\|\theta_k-\E_k[\widehat\theta_k]\|_{\Sigma_k}\leq \frac{5p\varepsilon_k}{2c_0}$, and hence
\begin{align}
   \textstyle \text{loss estimator bias}=\inner{\mu_k - u, \theta_k - \E_k[\widehat\theta_k]}\leq \frac{5p\varepsilon_k}{2c_0} \|\mu_k - u\|_{\Sigma_k^{-1}}.\label{eq: second bias}
\end{align}

 This bias depends on the regret comparator $u$ and can be large if distribution $q_k$ puts low weight on $u$.  
The solution to this term has become standard \citep{lee2020bias, zimmert2022, liu2024corruption}: 
The continuous exponential weight $q_k(x)\propto \exp(-\eta Z_{k-1}^\top x)\ind\{x\in\cO\}$ over $x\in\R^p$ can be viewed as Follow-the-Regularized-Leader with a $p$-self-concordant barrier \citep{bubeck2014entropic}. \cref{lem:entropic-toolkit} therefore gives
\(
 \norm{\mu_k-u}_{\Sigma_k^{-1}}
 \le-4\eta\langle\mu_k-u,Z_{k-1}\rangle+24p.
\)
Combining with \eqref{eq: second bias} and recalling
$B_k=\frac{10\eta p\varepsilon_k}{c_0}Z_{k-1}$, we can bound the sum of the last two terms in \eqref{eq: bias decompose} as
\begin{align*}
\textstyle \text{loss estimator bias} + \text{correction term}=
 \langle\mu_k-u,\theta_k-\E_k[\widehat\theta_k] + B_k\rangle
 \le \frac{60p^2\varepsilon_k}{c_0}.
\end{align*}
This explains why we include the bonus term $B_k$ in Line~\ref{line:master-score} of \cref{alg:paper-core}. The bonus removes the comparator-dependent term $O(p^2\varepsilon_k \|\mu_k-u\|_{\Sigma_k^{-1}})$ from the bias upper bound and introduces a comparator-independent overhead $O(p^2\varepsilon_k)$. Overall, the discrepancy between the true regret and the perceived regret is of order $O\big(p^2\sum_{k=1}^K\varepsilon_k\big).$

\paragraph{Bounding the perceived regret.} 
The perceived regret in \eqref{eq: bias decompose} follows the standard analysis of exponential weights. Even though the decision sets change over time, since they are always shrinking and the true regret comparator always lies in $\cK\subseteq \cO$ with high probability, the standard FTRL analysis applies (\cref{lem:nested-partition-bound}).

The true regret is then bounded by the sum of the perceived regret and the discrepancy: 

\begin{lemma}[Regret bound]
\label{prop:paper-master}
\cref{alg:paper-core} with the optimal choice of $\eta$ ensures 
\begin{equation}
\Reg_K\le
\tilde{\mathcal{O}}\bigg(\underbrace{d^{3/2}H^{5/2}\sqrt K}_{\textup{perceived regret}}
 +\underbrace{d^2H^2  \sqrt{K\E\left[\mathcal\sum_{k=1}^K \varepsilon_k^2\right]}}_{\textup{bias + correction}}\bigg). 
 \label{eq:paper-master-guarantee}
\end{equation}
\end{lemma}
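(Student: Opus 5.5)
The plan is to start from the reduction \eqref{eq:reg-red} and fix a comparator $u^\star\in\argmin_{u\in\cK}\sum_k\langle u,\theta_k\rangle$. We work on the high-probability event $\mathcal{E}$ on which every cut produced by \compile preserves $\cK_\zeta\subseteq\cO$. On $\mathcal{E}^c$ the regret is at most $2HK$, so we choose the failure probability $1/K$.

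The comparator $u^\star$ may lie on the boundary of $\cO$, and continuous exponential weights needs a comparator with positive mass. We therefore replace $u^\star$ by a smoothed comparator. Let $u$ be distributed uniformly on $u^\star+\zeta\Ball_2^p\subseteq\cK_\zeta\subseteq\cO$. This costs $\zeta\sqrt p K=O(\sqrt{pK})$ in regret, which is absorbed into the first term.

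Since $\pi_k$ is drawn from the prepared mixture, we have $\E_k[\langle\phi^{\pi_k},\theta_k\rangle]=\langle\xbar_k,\theta_k\rangle$. We sum the decomposition \eqref{eq: bias decompose} over $k$ and treat its pieces in turn.

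\emph{Feature error.} By \eqref{eq: xY error}, this term is at most $\sqrt p(\varepsilon_k+\|\widehat\mu_k-\mu_k\|_2)$. Running \moment to accuracy $1/K$ makes the second part negligible.

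\emph{Bias and correction.} We use \cref{lem:corrected-estimator} together with \cref{lem:entropic-toolkit}, exactly as in the sketch. This gives a per-round bound of $60p^2\varepsilon_k/c_0$, which requires $\eta\le c_0/(128\sqrt{d^3H^4})$ so that the toolkit applies.

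Summing these two pieces gives $O(p^2\sum_k\varepsilon_k)$. Cauchy--Schwarz followed by Jensen gives $\E\sum_k\varepsilon_k\le\sqrt{K\,\E\sum_k\varepsilon_k^2}$. With $p=dH$, this produces the second term $d^2H^2\sqrt{K\E\sum\varepsilon_k^2}$.

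The perceived regret is $\sum_k\langle\mu_k-u,\widehat\theta_k-B_k\rangle$ in expectation. We apply the FTRL/exponential-weights bound for shrinking domains, \cref{lem:nested-partition-bound}. Its penalty term is $\frac{p\log(\cdot)}{\eta}$, with the volume ratio controlled by $\zeta=K^{-1/2}$ and $R=\sqrt H+\zeta$, which gives $\widetilde O(p/\eta)$. Its stability term is $\eta\sum_k\E\|\widehat\theta_k-B_k\|^2_{\Sigma_k}$, using the local-norm version valid when $\eta\|\widehat\theta_k-B_k\|_{\Sigma_k}\lesssim1$.

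For the estimator, $\widehat\theta_k=(\zeta^2\bI+\frac{c_0^2}{p}\widehat\Sigma_k)^{-1}(Y_{k,I_k}-\widehat\mu_k)L_k$ with $|L_k|\le H$. Taking $\widehat\Sigma_k\approx\Sigma_k$ up to constants, we get $\|\widehat\theta_k\|_{\Sigma_k}\lesssim\frac{p}{c_0^2}\|c_0\widehat\Sigma_k^{1/2}v\|_{\widehat\Sigma_k^{-1}}H\lesssim pH$ deterministically. This yields the stability contribution $\eta p^2H^2K$ and justifies the upper bound on $\eta$.

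The bonus part of the stability term, $\eta\|B_k\|^2_{\Sigma_k}$, is handled by bounding $\eta\|Z_{k-1}\|_{\Sigma_k}$. I would bound it through a second application of \cref{lem:entropic-toolkit}, or alternatively fold it into the bias analysis using $\varepsilon_k\le1$. This produces a term at most comparable to the bias term.

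Optimizing $p/\eta+\eta p^2H^2K$ gives $\eta\approx 1/(H\sqrt{pK})$ and a perceived regret of order $pH\sqrt{pK}=d^{3/2}H^{5/2}\sqrt K$. For large $K$ this choice is compatible with the upper bound on $\eta$.

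I expect the main obstacle to be controlling the bonus inside the stability term, together with the legitimacy of the local-norm FTRL bound when $B_k$ can be large. Both $\|Z_{k-1}\|_{\Sigma_k}$ and $\langle\mu_k-u,Z_{k-1}\rangle$ can be of order $K$. My first attempt would use the self-concordance identity from \cref{lem:entropic-toolkit} to show that $\eta\|Z_{k-1}\|_{\Sigma_k}=O(\sqrt p)$ for the exponential-weight distribution. Then $\eta\|B_k\|_{\Sigma_k}\lesssim\eta p^{3/2}\varepsilon_k/c_0$ is small, and its square is dominated by the bias term.

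A secondary issue is the restarts within an episode, which change $\cO$ and hence $q_k$. These are already covered by the nested-domain lemma, because regret is only charged on the final prepared distribution.
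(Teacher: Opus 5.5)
Your proposal is correct and follows essentially the same route as the paper: the three-term decomposition, bias--bonus compensation via \cref{lem:entropic-toolkit}, the nested-domain potential bound \cref{lem:nested-partition-bound} with the local-norm condition verified exactly as you suggest through $\eta\|Z_{k-1}\|_{\Sigma_k}=\|\nabla F_{\cO_k}(\mu_k)\|_{(\nabla^2F_{\cO_k}(\mu_k))^{-1}}\le\sqrt p$, Cauchy--Schwarz on $\sum_k\varepsilon_k$, and $\eta\asymp(H\sqrt{pK})^{-1}$ capped at $c_0/(128p^{3/2}\sqrt H)$. Only minor fixes are needed. First, $\varepsilon_k\le1$ is false; the paper uses $\varepsilon_k\le6\sqrt H$, which together with the $\eta$ cap still keeps $\eta\|\widetilde\theta_k\|_{\Sigma_k}<1/2$ and makes the bonus stability term $O(p^{3/2}\sum_k\varepsilon_k)$. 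Second, your smoothing of $u^\star$ is unnecessary, since $u^\star+\zeta\Ball_2^p\subseteq\cO_K$ already holds. Third, the paper replaces your global event by an $\cF_k$-measurable prefix indicator, which also covers the moment-estimation guarantee, so that the tower-property steps remain valid.
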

To obtain $\widetilde O(\sqrt K)$ regret from \cref{eq:paper-master-guarantee}, it remains to bound $\sum_k \varepsilon_k^2$ by polynomial factors in $d,H$ and logarithmic factors in $K$. \Cref{sec:paper-compiler} establishes this guarantee, and we give the final bound for the ``bias + correction'' term above in \cref{sec: final analysis}. 

\FloatBarrier

%% file: sections/main_04_compiler.tex
\providecommand{\compactvspace}[1]{\vspace{#1}}

\section{From a target feature vector to a policy list}
\label{sec:paper-compiler}

This section builds \compile, the procedure in \cref{alg:paper-core} that performs
\emph{approximate separation}:
\compactvspace{-16pt}
\begin{figure}[H]
\centering
\setlength{\fboxsep}{5pt}
\fbox{%
  \parbox{\dimexpr\linewidth-2\fboxsep-2\fboxrule\relax}{%
    \textbf{Input}: $y\in\R^p$ \\[3pt]
    \textbf{Output}: exactly one of the following: 
    \begin{itemize}[leftmargin=13pt]
        \item $\big(\{\pi_j\}_{j=1}^M, \varepsilon\big)$ such that $\|y-\frac{1}{M}\sum_{j=1}^M \phi^{\pi_j}\|_2 \leq \varepsilon$. 
        \item $(c,b)$ such that $c^\top y < b$ and $c^\top x\geq b$ for all $x\in\cK_\zeta$.  
    \end{itemize}
  }%
}
\compactvspace{-10pt}
\caption{Approximate separation --- realized by \compile (\cref{alg:compiler})}
\label{fig:approx-separation-interface}
\end{figure}
\compactvspace{-12pt}
To build approximate separation, we leverage the ability to perform \emph{approximate linear
optimization}: 
\compactvspace{-9pt}
\begin{figure}[H]
\centering
\setlength{\fboxsep}{5pt}
\fbox{%
  \parbox{\dimexpr\linewidth-2\fboxsep-2\fboxrule\relax}{%
    \textbf{Input}: $c\in\R^p$ \\[3pt]
    \textbf{Output}: $(\pi, \phihat, \Delta)$ such that $\|\phihat - \phi^\pi\|_2\leq \Delta$ and  $c^\top \phihat \leq \min_{x\in \cK} c^\top x + \|c\|_2 \Delta$. 
  }%
}
\compactvspace{-10pt}
\caption{Approximate linear optimization --- realized by \certified (\cref{alg:lmo})}
\label{fig:approx-linear-optimization-interface}
\end{figure}
\compactvspace{-12pt}
In the analysis below, we will see how $\varepsilon$ in \cref{fig:approx-separation-interface} depends on $\Delta$ in \cref{fig:approx-linear-optimization-interface} (Line~\ref{line:compiler-return} of \cref{alg:compiler}). In implementing approximate linear optimization, the learner has access to previous transition data $(\cD_{k,h})_{h=1}^H$ as the source of information for the unknown set $\cK$. The error $\Delta$ will be related to confidence bound of the transition (see \eqref{eq:policy-certificate-bound}). 

Since $\pi$ is selected based on the same data used to estimate
its feature, we establish the confidence bounds underlying $\Delta$
uniformly over all possible output policies. We obtain these bounds by restricting
the policy class to have controlled complexity and applying a
covering argument, allowing the same data to be reused across
adaptive linear-optimization queries.

In \cref{sec: reduction}, we focus on building approximate separation from approximate linear optimization. Then in \cref{sec:paper-support-construction}, we investigate the implementation of approximate linear optimization.

\subsection{Constructing the policy list or excluding the target}\label{sec: reduction}

This subsection constructs \compile (\cref{alg:compiler}), assuming access to \certified. The
algorithm can be viewed as an inexact Frank--Wolfe procedure for the objective
$f(x)=\frac12\|x-y\|_2^2$ over $\cK$. As in standard Frank--Wolfe, the iterate is an average over extreme points. Thus, the procedure also naturally
produces a policy list whose average feature approximates the target $y$.  %

\begin{algorithm}[H]
\caption{\compile$(y,(\cD_{k,h})_{h=1}^H,M,\zeta)$}
\label{alg:paper-compiler}
\label{alg:compiler}
\begin{algorithmic}[1]
\Require Target $y\in\R^p$, transition data $(\cD_{k,h})_{h=1}^H$, policy count $M$, enlargement radius $\zeta$.
\State $\widehat x_0\gets0$.
\For{$j=1,\ldots,M$}
  \State $c_j\gets\widehat x_{j-1}-y$ \label{line:compiler-direction}
  \State $(\pi_j,\widehat\phi_j,\Delta_j)\gets\certified(c_j;(\cD_{k,h})_{h=1}^H)$ (\cref{alg:paper-support}) \label{line:compiler-support}
  \State $b_j\gets c_j^\top\widehat\phi_j-\norm{c_j}_2(\Delta_j+\zeta)$ \label{line:compiler-threshold}
  \If{$c_j^\top y<b_j$}
    \State \Return cut $(c_j,b_j)$ \label{line:compiler-cut}
  \EndIf
  \State $\widehat x_j\gets\frac{j-1}{j}\widehat x_{j-1}+\frac1j\widehat\phi_j$ \label{line:compiler-average}
\EndFor
\State \Return $((\pi_j)_{j=1}^M,\varepsilon)$, where $\varepsilon=\sqrt{4\zeta^2+\frac6M\sum_{j=1}^M\Delta_j^2+\frac{8R^2(1+\log M)}M}$ \label{line:compiler-return}
\end{algorithmic}
\end{algorithm}

First, we ensure that if \certified performs as desired, then \compile returns a cut only when $y\notin \cK_\zeta$. This is shown in the following lemma: 
\begin{lemma}\label{lem: first guar}
    Suppose \certified (called in Line~\ref{line:compiler-support} of \cref{alg:compiler}) satisfies the property in \cref{fig:approx-linear-optimization-interface} and assume $y\in \cK_\zeta$. Then \compile will NOT return a cut. 
\end{lemma}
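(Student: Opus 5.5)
We argue by contradiction, iteration by iteration. It suffices to show that at every iteration $j\in[M]$ the test in Line~\ref{line:compiler-cut} fails, i.e.\ $c_j^\top y\ge b_j$. Fix $j$ and write $c=c_j$, $(\pi,\phihat,\Delta)=(\pi_j,\widehat\phi_j,\Delta_j)$. The whole argument rests on one observation: if $y\in\cK_\zeta$, then $y$ lies within distance $\zeta$ of $\cK$. The approximate linear optimization guarantee then makes $c^\top\phihat$ at most slightly larger than $c^\top y$, and the slack $\norm{c}_2(\Delta+\zeta)$ built into $b_j$ exactly absorbs this gap.

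Concretely, since $y\in\cK_\zeta=\cK+\zeta\Ball_2^p$, we write $y=x^\star+\zeta v$ with $x^\star\in\cK$ and $\norm{v}_2\le1$. By Cauchy--Schwarz,
\[
c^\top y\ \ge\ c^\top x^\star-\zeta\norm{c}_2\ \ge\ \min_{x\in\cK}c^\top x-\zeta\norm{c}_2 .
\]
The property in \cref{fig:approx-linear-optimization-interface} gives $c^\top\phihat\le\min_{x\in\cK}c^\top x+\norm{c}_2\Delta$, so $\min_{x\in\cK}c^\top x\ge c^\top\phihat-\norm{c}_2\Delta$. Combining the two inequalities,
\[
c^\top y\ \ge\ c^\top\phihat-\norm{c}_2(\Delta+\zeta)=b_j ,
\]
which is precisely the negation of the cut condition $c_j^\top y<b_j$.

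Since this holds for every $j$ regardless of how the iterates $\widehat x_{j-1}$, and hence the directions $c_j$, were formed, the loop runs to completion and returns a policy list rather than a cut. One should note that the hypothesis on \certified is assumed to hold for every call, including adaptively chosen directions; in the paper this is the role of the uniform/covering confidence bound, but here it is simply assumed. The degenerate case $c_j=0$ needs no separate treatment, because then $b_j=0=c_j^\top y$ and the strict inequality fails. There is no real obstacle here; the only care needed is keeping the sign conventions of the halfspace $\{x:c^\top x\ge b\}$ straight.
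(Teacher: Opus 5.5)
Your proof is correct and is essentially the paper's argument: both combine the approximate linear optimization guarantee $c_j^\top\phihat_j\le\min_{x\in\cK}c_j^\top x+\norm{c_j}_2\Delta_j$ with the fact that $y\in\cK_\zeta$ implies $\min_{x\in\cK}c_j^\top x\le c_j^\top y+\zeta\norm{c_j}_2$, which gives $c_j^\top y\ge b_j$ at every iteration. The only difference is presentation: the paper routes the second step through $\min_{x\in\cK}c_j^\top x\le\min_{x\in\cK_\zeta}c_j^\top x+\zeta\norm{c_j}_2$, while you obtain it directly from the decomposition $y=x^\star+\zeta v$.
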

\begin{proof}
    By the guarantee of \certified, we have 
    \begin{align*}
        \textstyle c_j^\top \phihat_j &\leq \min_{x\in\cK} c_j^\top x + \|c_j\|_2 \Delta_j \leq \big(\min_{x\in\cK_\zeta} c_j^\top x + \zeta\|c_j\|_2\big) +\|c_j\|_2 \Delta_j   \leq c_j^\top y + \|c_j\|_2 (\Delta_j +\zeta)   
    \end{align*}
    where we use the assumption $y\in\cK_\zeta$ in the last inequality. Thus, the ``if'' condition in Line~\ref{line:compiler-cut} of \cref{alg:compiler} does not hold, and the algorithm will not return a cut. 
\end{proof}

Next, we ensure that if no cut is returned by \compile, the approximation error for $y$ is controlled. 
\begin{lemma}\label{lem: approximate error lem}
   Suppose \certified (called in Line~\ref{line:compiler-support} of \cref{alg:compiler}) satisfies the property in \cref{fig:approx-linear-optimization-interface} and suppose \compile does NOT return a cut. Then $\big\|\frac{1}{M}\sum_{j=1}^M \phi^{\pi_j} - y \big\|_2 \leq \varepsilon$ (defined in Line~\ref{line:compiler-return}).  
\end{lemma}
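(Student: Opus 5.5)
The plan is to analyze \compile as an inexact Frank--Wolfe scheme on $f(x)=\frac12\|x-y\|_2^2$, with step size $1/j$. We track the potential $D_j=\|\widehat x_j-y\|_2^2$ along the estimated iterates, and only at the end pass to the true features. Write $x_M=\frac1M\sum_j\phi^{\pi_j}$ and note that $\widehat x_M=\frac1M\sum_j\widehat\phi_j$, since the update in Line~\ref{line:compiler-average} is a running average. Then
\[
\|x_M-y\|_2\le \|\widehat x_M-y\|_2+\tfrac1M\textstyle\sum_j\|\phi^{\pi_j}-\widehat\phi_j\|_2 .
\]
The second term is at most $\frac1M\sum_j\Delta_j\le(\frac1M\sum_j\Delta_j^2)^{1/2}$ by Cauchy--Schwarz. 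At the end we combine the two terms with $(a+b)^2\le 2a^2+2b^2$ (or a weighted version), which accounts for the constants $4\zeta^2$ and $6\frac1M\sum\Delta_j^2$ in $\varepsilon$.

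For the potential we use the recursion $\widehat x_j-y=\frac{j-1}{j}(\widehat x_{j-1}-y)+\frac1j(\widehat\phi_j-y)$. Multiplying by $j^2$ and setting $G_j=j^2D_j$ gives
\[
G_j=(j-1)^2D_{j-1}+2(j-1)\,c_j^\top(\widehat\phi_j-y)+\|\widehat\phi_j-y\|_2^2 ,
\]
since $c_j=\widehat x_{j-1}-y$. Because no cut was returned, $c_j^\top y\ge b_j$, that is, $c_j^\top(\widehat\phi_j-y)\le\|c_j\|_2(\Delta_j+\zeta)$. The cross term is therefore at most $2(j-1)\sqrt{D_{j-1}}(\Delta_j+\zeta)$. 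For the last term we bound $\|\widehat\phi_j-y\|_2^2$ by $O(R^2)$. This holds because $\|\phi^{\pi_j}\|_2\le\sqrt H\le R$, $\widehat\phi_j$ is within $\Delta_j$ of it, and $y$ lies in $\cO\subseteq R\Ball_2^p$ up to the $c_0\widehat\Sigma^{1/2}$ perturbation. If needed, we use instead the Frank--Wolfe form $\|\widehat\phi_j-\widehat x_{j-1}\|^2$ plus a correction, absorbing constants.

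To bound the cross term we apply AM--GM: $2(j-1)\sqrt{D_{j-1}}\,a_j\le \frac{(j-1)^2D_{j-1}}{j}+j\,a_j^2$ with $a_j=\Delta_j+\zeta$. This yields
\[
G_j\le \bigl(1+\tfrac1j\bigr)G_{j-1}+j a_j^2+O(R^2).
\]
Dividing by $j+1$ telescopes: $\frac{G_j}{j+1}\le\frac{G_{j-1}}{j}+a_j^2+\frac{O(R^2)}{j}$. This gives $\frac{M^2D_M}{M+1}\le\sum_j a_j^2+O(R^2)(1+\log M)$, and hence $D_M\le \frac{2}{M}\sum_j(2\Delta_j^2+2\zeta^2)+\frac{O(R^2)(1+\log M)}{M}$. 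This is the source of the $\log M$ factor and the $8R^2(1+\log M)/M$ term. If a boundary issue appears with $\zeta$, we average $\zeta^2$ over $j$ to get $O(\zeta^2)$.

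Matching the exact constants $4,6,8$ is routine bookkeeping. The main obstacle is the uniform $O(R^2)$ bound on $\|\widehat\phi_j-y\|_2^2$, since $\widehat\phi_j$ and $y$ are not a priori in $R\Ball_2^p$. We would first try the Frank--Wolfe decomposition $\widehat\phi_j-y=(\widehat\phi_j-\widehat x_{j-1})+c_j$, which rewrites the update as $D_j=D_{j-1}+\frac2j c_j^\top(\widehat\phi_j-\widehat x_{j-1})+\frac1{j^2}\|\widehat\phi_j-\widehat x_{j-1}\|^2$. Then only distances between (estimated) features appear, and these are bounded by $2R$ plus $\Delta$-terms that fold into the $\Delta_j^2$ sum.
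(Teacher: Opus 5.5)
You are right that this is an inexact Frank--Wolfe argument, but as written your main line via $G_j=j^2D_j$ does not prove the lemma as stated. It needs $\norm{\phihat_j-y}_2^2=O(R^2)$ at every step, i.e.\ an a priori bound on $\norm{y}_2$, whereas the lemma allows any $y\in\R^p$. The fact you invoke, $Y_{k,i}\in\cO\subseteq R\Ball_2^p$, holds only under the moment-estimation guarantee through \cref{lem:estimated-ellipsoid-containment}, which is an extra hypothesis.

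The root cause is that in the $G_j$ form the cross term has coefficient $2(j-1)$. So the no-cut inequality at $j=1$ (where $c_1=-y$) is never used, even though it is exactly what controls $y$: it reads $\norm{y}_2^2\le y^\top\phihat_1+\norm{y}_2(\Delta_1+\zeta)$.

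Your constants are also off. Your stated bound $D_M\le4\zeta^2+\frac4M\sum_j\Delta_j^2+\cdots$ gives $8\zeta^2+\frac{10}M\sum_j\Delta_j^2+\cdots$ after combining, which is not the $\varepsilon$ of Line~\ref{line:compiler-return}. The factor $2$ comes from replacing $\frac{j}{j+1}$ by $1$ and then using $\frac{M+1}{M^2}\le\frac2M$, not from the final combination step.

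Your fallback is the right fix, and it is precisely the paper's proof. However, it must be closed with a different AM--GM split from the one you wrote. Use $\xhat_j-y=c_j+\frac1j(\phihat_j-\xhat_{j-1})$ together with $c_j^\top(\phihat_j-\xhat_{j-1})=c_j^\top(\phihat_j-y)-\norm{c_j}_2^2$. The no-cut condition then gives
\[
D_j\le\Bigl(1-\frac2j\Bigr)D_{j-1}+\frac2j\sqrt{D_{j-1}}\,(\Delta_j+\zeta)+\frac{4R^2}{j^2}.
\]
Here $\norm{\phihat_j-\xhat_{j-1}}_2\le2\sqrt H\le2R$ because every $\phihat_i$ lies in $\sqrt H\Ball_2^p$ by the projection in \certified (\cref{thm:certified-lmo}). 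No $\Delta$-terms should enter at this point, since they would spoil the constant $6$.

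Now split $\frac2j\sqrt{D_{j-1}}(\Delta_j+\zeta)\le\frac1jD_{j-1}+\frac{2\Delta_j^2+2\zeta^2}j$. This yields $D_j\le(1-\frac1j)D_{j-1}+\frac{2\Delta_j^2+2\zeta^2}j+\frac{4R^2}{j^2}$. The coefficient of $D_0=\norm{y}_2^2$ at $j=1$ is zero, which is why $y$ never needs to be bounded.

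\Cref{lem:averaging-recursion} then gives $D_M\le2\zeta^2+\frac2M\sum_j\Delta_j^2+\frac{4R^2(1+\log M)}M$. Combining with $\norm{x_M-\xhat_M}_2^2\le\frac1M\sum_j\Delta_j^2$ via $(a+b)^2\le2a^2+2b^2$ gives exactly $\varepsilon^2$.
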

\begin{proof}
This proof is related to the analysis for Frank-Wolfe algorithms. For any $j=1,\ldots,M$, 
    \begin{align*}
        \|\xhat_j - y\|^2 
        &= \norm{\left(1-\frac{1}{j}\right)\xhat_{j-1} + \frac{1}{j}\phihat_j - y}^2   \tag{Line~\ref{line:compiler-average}}\\ 
        &= \norm{\xhat_{j-1} - y}^2 -\frac{2}{j} \inner{\xhat_{j-1}-y, \xhat_{j-1}-\phihat_{j}} + \frac{1}{j^2} \|\xhat_{j-1} - \phihat_j\|^2 \\ 
        &\leq  \norm{\xhat_{j-1} - y}^2 -\frac{2}{j} \|\xhat_{j-1} - y\|^2 - \frac{2}{j}\inner{c_j,y - \phihat_j}   + \frac{4R^2}{j^2} \tag{$c_j=\xhat_{j-1}-y$} \\ 
        &\leq  \norm{\xhat_{j-1} - y}^2 -\frac{2}{j} \|\xhat_{j-1} - y\|^2 + \frac{2}{j}\|c_j\|(\Delta_j + \zeta) + \frac{4R^2}{j^2} \tag{the ``if'' condition in Line~\ref{line:compiler-cut} of \cref{alg:compiler} does not hold}\\ 
        &\leq \left(1-\frac{1}{j}\right) \norm{\xhat_{j-1} - y}^2 + \frac{2\Delta_j^2+2\zeta^2}{j} + \frac{4R^2}{j^2}.  \tag{$c_j=\xhat_{j-1}-y$ and AM-GM}
    \end{align*}
Expanding the recursion using \cref{lem:averaging-recursion} yields
\begin{align}
 \norm{\xhat_M-y}_2^2
 \le 2\zeta^2 + \frac2M\sum_{j=1}^M\Delta_j^2
     +\frac{4R^2(1+\log M)}M. 
 \label{eq:main-compiler-estimated-average}
\end{align}

Let $x=\frac{1}{M}\sum_{j=1}^M\phi^{\pi_j}$. Since
$\xhat_M=\frac{1}{M}\sum_{j=1}^M\phihat_j$, by the assumed property of $\certified$, 
\begin{align}
 \norm{x-\xhat_M}_2^2
 \le\frac1M\sum_{j=1}^M
       \norm{\phi^{\pi_j}-\phihat_j}_2^2
 \le\frac1M\sum_{j=1}^M\Delta_j^2.   \label{eq: the other term}
\end{align}
Combining \eqref{eq: the other term} and \eqref{eq:main-compiler-estimated-average} and
using $\|x-y\|_2^2 \leq 2\|x-\xhat_M\|_2^2 + 2\|\xhat_M-y\|_2^2$ finishes the proof. 
\end{proof}

Together, the two lemmas establish the desired dichotomy: feasible targets are never excluded, while any target not excluded is approximated by a policy list with controlled error. The next subsection relates this error to confidence bounds for the unknown transitions.

\subsection{Planning and evaluating from past transitions}
\label{sec:paper-support-construction}

In this subsection, we develop \certified that achieves the property in \cref{fig:approx-linear-optimization-interface}, with the help of the transition data $(\cD_{k,h})_{h=1}^H$ collected before. The algorithm is adapted from the standard Least-Square Value Iteration (LSVI) procedure \citep{jin2020, wang2020reward} and is displayed in \cref{alg:lmo}. Its guarantee is summarized in the following theorem:

\begin{theorem}[Certified approximate linear optimization]
\label{thm:certified-lmo}
Let $\beta = C_\beta d\sqrt{\log(KdHA/\delta)}$, 
where $C_\beta$ is a sufficiently large universal constant. Then, with
probability at least $1-\delta$, simultaneously for every
episode $k$ and every call to \cref{alg:lmo}, the returned tuple
$(\pi,\widehat\phi,\Delta)$ satisfies
$\widehat\phi\in\sqrt H\Ball_2^p$, $\Delta\in[0,2\sqrt H]$, and
\begin{equation}
\textstyle  \|\widehat\phi-\phi^\pi\|_2\le\Delta,
 \qquad
 c^\top\widehat\phi-\norm c_2\Delta
 \le\min_{u\in\cK}c^\top u.
 \label{eq:two-bound}
\end{equation}
Moreover, the returned certificate satisfies
\begin{equation}
 \Delta
 \le10\sqrt p\,\beta^2H\,
 \E_{P,\pi}\!\left[
   \sum_{h=1}^H
   \norm{\phi_h(S_h,A_h)}_{\Lambda_{k,h}^{-1}}
 \right]
 +H\tau\log A.
 \label{eq:policy-certificate-bound}
\end{equation}
\end{theorem}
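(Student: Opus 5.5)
We have not seen Algorithm~\ref{alg:lmo}, so the plan assumes its structure from the surrounding text. We take it to be an LSVI-type routine with three ingredients. First, for each layer $h$ it forms the regularized Gram matrix $\Lambda_{k,h}=\bI_d+\sum_{(\phi,s')\in\cD_{k,h}}\phi\phi^\top$ and ridge-regression estimates $\widehat w_h(f)$ of $w_h(f)$. Second, it runs backward value iteration on the linear loss $c=(c_1,\ldots,c_H)$, with optimistic bonuses $\beta\norm{\phi_h(s,a)}_{\Lambda_{k,h}^{-1}}$ and a softmax (temperature $\tau$) policy over $\cA$; this softmax is the source of the $H\tau\log A$ term and of the controlled policy complexity. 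Third, it estimates $\phihat$ by a feature-valued backward recursion: for each coordinate direction it evaluates the policy's expected feature with the same estimated transitions. The certificate $\Delta$ is a sum of bonus-weighted terms, clipped to $[0,2\sqrt H]$.

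\textbf{Step 1: uniform concentration.} Fix $k$ and $h$. We apply the self-normalized martingale bound to $\sum_{\tau'}\phi_{\tau',h}\big(f(S_{\tau',h+1})-P_hf(\phi_{\tau',h})\big)$ and take a union bound over an $\epsilon$-net of the function class $\mathcal V_h$. This class contains (i) the clipped softmax value functions generated by the algorithm for arbitrary directions $c$ and arbitrary bonus-weighted quadratic forms, and (ii) the coordinate feature-value functions $s\mapsto\E_{a\sim\pi_h(\cdot\mid s)}[\phi_h(s,a)]_i$. Because softmax policies are Lipschitz in their parameters (with Lipschitz constant of order $1/\tau$), the log-covering number is $\widetilde O(d^2)$, with at most logarithmic dependence on $K$, $A$ and $1/\tau$. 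This yields
\[
\big|\phi^\top(\widehat w_h(f)-w_h(f))\big|\le\beta\norm{\phi}_{\Lambda_{k,h}^{-1}}
\]
for all $f\in\mathcal V_h$, uniformly over $k$ and $h$, with probability $1-\delta$ and the stated $\beta$. We expect this step to be the main obstacle: the feature-valued functions and the adaptively chosen directions $c$ must all lie in one class of controlled complexity, and the coordinate-wise net must not cost more than the $\sqrt p$ and $\beta^2$ factors allow. We would first try to parametrize every value function by a tuple $(w,\Lambda^{-1},\tau)$ with bounded norms, as in \citet{jin2020}.

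\textbf{Step 2: feature accuracy.} We use a simulation-lemma telescoping: $\phihat_h-\phi^\pi_h$ equals a sum over earlier layers $h'<h$ of expected one-step errors under $(P,\pi)$. Each coordinate error is at most $\beta\norm{\phi_{h'}(S_{h'},A_{h'})}_{\Lambda_{k,h'}^{-1}}$ on the good event, possibly times an $H$ factor from clipping. Summing over the $d$ coordinates and $H$ layers gives
\[
\norm{\phihat-\phi^\pi}_2\le\sqrt p\,\beta H\,\E_{P,\pi}\Big[\sum_h\norm{\phi_h}_{\Lambda_{k,h}^{-1}}\Big].
\]
The algorithm's $\Delta$ upper bounds this quantity by construction. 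The bounds $\phihat\in\sqrt H\Ball_2^p$ and $\Delta\le2\sqrt H$ come from the clipping, since $\phi^\pi\in\sqrt H\Ball_2^p$ always. This proves the first inequality in \eqref{eq:two-bound}.

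\textbf{Step 3: near-optimality.} For optimism, compare the optimistic softmax value $\widehat V_1$ with $\min_{u\in\cK}c^\top u$. Backward induction gives $\widehat V_1\le\min_\pi c^\top\phi^\pi+H\tau\log A$, where the first part uses the bonus $\norm{c}_2\beta\norm{\phi}_{\Lambda^{-1}}$ and the $H\tau\log A$ term is the softmax entropy gap. Next relate $c^\top\phihat$ to $\widehat V_1$: the two differ by the bonuses along the trajectory. By Cauchy--Schwarz, $c^\top\phihat\le\widehat V_1+\norm c_2\Delta$. We absorb $\tau\log A$ into $\Delta$, which is why it appears additively in \eqref{eq:policy-certificate-bound}. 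Scaling by $\norm c_2$ makes the bonus $\norm c_2\Delta$ dominate, and this gives the second inequality. Finally, bounding the bonus sum once more through the same telescoping, and combining the $\beta$ from the value error with the $\beta$ from the feature error, produces the $10\sqrt p\,\beta^2H$ prefactor in \eqref{eq:policy-certificate-bound}.
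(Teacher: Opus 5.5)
The main gap is the certificate. In Step 2 you conclude that ``the algorithm's $\Delta$ upper bounds this quantity by construction.'' But that quantity, $\sqrt p\,\beta\,U_k^\pi$ with $U_k^\pi=\E_{P,\pi}[\sum_h\norm{\phi_h(S_h,A_h)}_{\Lambda_{k,h}^{-1}}]$, is an expectation under the unknown kernel $P$. No computable $\Delta$ dominates it by fiat, and a sum of bonuses accumulated along the \emph{estimated} model is not automatically an upper bound on the true one.

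The paper handles this with a third backward recursion in \cref{alg:lmo} (uncertainty evaluation):
$Q_h^{\rm unc}=\CLIP_{[0,H-h+1]}\big((1+H\beta)\nu_{k,h}+\widehat P_{k,h}\widehat U_{h+1}\big)$ and $\widehat U_h(s)=\sum_a\pi_h(a\mid s)Q_h^{\rm unc}(s,a)$, with $\Delta=\min\{5\sqrt p\,\beta\widehat U_1(s_1)+H\tau\log A,2\sqrt H\}$. This needs its own uniform concentration $|(\widehat P_{k,h}-P_h)\widehat U_{h+1}|\le H\beta\nu_{k,h}$, over a third function class of scale up to $H$. That class is absent from your $\mathcal V_h$, and it is the reason the bonus is inflated to $(1+H\beta)\nu_{k,h}$. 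Backward induction then gives the sandwich $U_k^\pi\le\widehat U_1(s_1)\le2\beta H\,U_k^\pi$.

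The lower side makes the \emph{computed} $\Delta$ valid in both inequalities of \eqref{eq:two-bound}. The upper side is the actual origin of the prefactor in \eqref{eq:policy-certificate-bound}, since $5\sqrt p\,\beta\cdot2\beta H=10\sqrt p\,\beta^2H$. Your account of $\beta^2$ (``combining the $\beta$ from the value error with the $\beta$ from the feature error'') cannot work, because those errors enter additively and never multiply: $\bar c^\top\widehat\phi-V_1^{\pl}(s_1)=\bar c^\top(\widehat\phi-\phi^\pi)+\big(\bar c^\top\phi^\pi-V_1^{\pl}(s_1)\big)\le(\sqrt p+2\sqrt H+2)\beta U_k^\pi+H\tau\log A$.

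Three further points need repair. First, in Step 3 you put the softmax slack on the optimism side, $\widehat V_1\le\min_\pi c^\top\phi^\pi+H\tau\log A$. Relating the Gibbs policy's value to the soft value costs another $\tau\log A$ per layer: the Gibbs average of $Q_h^{\pl}$ exceeds the soft minimum by $\tau$ times the entropy of $\pi_h(\cdot\mid s)$. So your chain pays $H\tau\log A$ twice, while the certificate contains it once. The fix is that the soft minimum satisfies $V_h^{\pl}(s)\le\min_aQ_h^{\pl}(s,a)$. Optimism therefore holds with no slack, $V_1^{\pl}(s_1)\le\min_{u\in\cK}\bar c^\top u$, and $H\tau\log A$ appears only in the planning-gap bound $\bar c^\top\phi^\pi-V_1^{\pl}(s_1)\le2(\sqrt H+1)\beta U_k^\pi+H\tau\log A$.

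Second, your class (ii) contains only immediate-feature functions. The feature recursion actually evaluates the clipped backups $V_{h+1}^{\feat}$, which carry the estimated later blocks through $\widehat P_{k,h+1}V_{h+2}^{\feat}$. The class must be the full clipped, Gibbs-averaged family parametrized by $(\Lambda,\vartheta,\widetilde\vartheta)$, as in the paper.

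Third, the extra factor $H$ you allow in Step 2 is unnecessary. Coordinatewise clipping to $[-1,1]$ gives scale one, so $\norm{\widehat\phi-\phi^\pi}_2\le\sqrt p\,\beta U_k^\pi$. With the extra $H$, the feature error would not fit under $5\sqrt p\,\beta\widehat U_1(s_1)$.
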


\begin{algorithm}[t]
\caption{\certified$(c,(\cD_{k,h})_{h=1}^H)$}
\label{alg:paper-support}
\label{alg:lmo}
\begin{algorithmic}[1]
\Require $c\in\R^p$, dataset $(\cD_{k,h})_{h=1}^H$. 
\Parameters $\tau=1/(H\sqrt K\log(eA))$ and $\beta$ as in \cref{thm:certified-lmo}.
\vspace{3pt}
\Define $\Lambda_{k,h}=\bI_d+\sum_{(u,s')\in\cD_{k,h}}uu^\top$, $\nu_{k,h}(s,a) = \|\phi(s,a)\|_{\Lambda_{k,h}^{-1}}$, and $\bar c= c/\norm c_2$.   
\compactvspace{-1pt}
\Define $\widehat P_{k,h}f(s,a)=\phi_h(s,a)^\top\Lambda_{k,h}^{-1}
       \sum_{(u,s')\in\cD_{k,h}}u f(s')$ for $f: \cS\to \R$. 
\vspace{3pt}
\State Let $V_{H+1}^{\pl}(s)=0\in \R$ and $V_{H+1}^{\feat}(s)=0\in\R^p$ and $\widehat U_{H+1}(s)=0\in\R$ for all $s\in\cS_{H+1}$. 
\vspace{5pt}
\For{$h=H,H-1,\ldots,1$}
  \vspace{7pt}
  \State {\color{blue}\textsf{// Task 1: policy optimization: choose the policy $\pi$}}
  \vspace{2pt}
  \State $Q_h^{\pl}(s,a)\gets\CLIP_{[-\sqrt H,\sqrt H]}\big(\phi_h(s,a)^\top \bar{c}_h+\widehat P_{k,h}V_{h+1}^{\pl}(s,a)-(\sqrt{H}+1)\beta \nu_{k,h}(s,a)\big)$. \label{line:support-planning}
  \State $V_h^{\pl}(s)\gets-\tau\log\sum_a \exp(-Q_h^{\pl}(s,a)/\tau)$.  \label{line:support-softmin}
  \State $\pi_h(a\mid s)\propto \exp(-Q_h^{\pl}(s,a)/\tau)$. \label{line:support-policy}
  \vspace{7pt}
  \State {\color{blue}\textsf{// Task 2: feature evaluation: estimate $\phi^\pi$}}
  \vspace{2pt}
  \State $r_h^{\feat}(s,a)\gets(0,\ldots,0,\phi_h(s,a),0,\ldots,0)\in\R^p$. \label{line: rft definition}  %
  \State $Q_h^{\feat}(s,a)\gets\CLIP_{[-1,1]^p}(r_h^{\feat}(s,a)+\widehat P_{k,h}V_{h+1}^{\feat}(s,a))$ \label{line:support-feature-backup}
  \State $V_h^{\feat}(s)\gets\sum_a\pi_h(a\mid s)Q_h^{\feat}(s,a)$ \label{line:support-feature-average}
  \vspace{7pt}
  \State {\color{blue}\textsf{// Task 3: uncertainty evaluation: upper bound $U^\pi_k$}}
  \vspace{2pt}
  \State $Q_h^{\rm unc}\gets\CLIP_{[0,H-h+1]}( (1+H\beta) \nu_{k,h}+\widehat P_{k,h}\widehat U_{h+1})$ \label{line:support-uncertainty-backup}
  \State $\widehat U_h(s)\gets\sum_a\pi_h(a\mid s)Q_h^{\rm unc}(s,a)$ \label{line:support-uncertainty-average}
\EndFor
\State $\widehat\phi\gets\Proj_{\sqrt H\Ball_2^p}(V_1^{\feat}(s_1))$. 
\State $\Delta\gets\min\{5\sqrt p\beta\widehat U_1(s_1)+H\tau\log A,2\sqrt H\}$ \label{line:support-certificate}
\State \Return $(\pi,\widehat\phi,\Delta)$ \label{line:support-save}
\end{algorithmic}
\end{algorithm}

Below, we overview the elements in \cref{alg:lmo}. %

\paragraph{Task 1: Policy optimization.}
We use optimistic LSVI \citep{jin2020} to find an approximately optimal policy under the cost function $\phi_h(s,a)^\top c_h$. Different from \cite{jin2020}, we output a Gibbs (softmin) policy (Line~\ref{line:support-policy} of \cref{alg:lmo}) instead of a greedy one, and use soft state value (Line~\ref{line:support-softmin}). The reason of doing this is to make the mapping from $c$ to the output policy feature $\phihat$ be Lipschitz, leading to a bounded log covering number for the class of mappings (\hyperref[app:confidence]{Appendix~\ref*{app:confidence}}). The clipping for $Q_h^\pl$ is at $\sqrt{H}$ because $|\inner{\phi(s,a), \bar{c}}|\leq \|\phi(s,a)\|_2\|\bar{c}\|_2 \leq \sqrt{H}$ (Line~\ref{line:support-planning}). This implies $|V_h^\pl(s)|\leq \sqrt{H} + \tau \log A\leq \sqrt{H}+1$.  

\mbox{The soft value preserves optimism because $V_h^{\pl}(s)\le\min_a Q_h^{\pl}(s,a)$, giving $V_1^{\pl}(s_1)\le\min_{u\in\cK}\bar c^\top u$.} Thus, to obtain the second guarantee in \eqref{eq:two-bound}, it suffices to bound $\bar c^\top\widehat\phi-V_1^{\pl}(s_1)$. This difference can be decomposed into 
\begin{align}
    \bar c^\top\widehat\phi-V_1^{\pl}(s_1) = \underbrace{\bar c^\top(\widehat\phi-\phi^\pi)}_{\text{feature estimation error}} + \underbrace{\bar c^\top \phi^\pi-V_1^{\pl}(s_1)}_{\text{planning error}} \label{eq: loss decom}
\end{align}
The feature-evaluation error is handled in the feature-evaluation step below. The planning error comes from Gibbs smoothing and the transition-prediction errors, which sum to
\(
 \bar c^\top \phi^\pi-V_1^{\pl}(s_1)
 \le2(\sqrt H+1)\beta U_k^\pi+H\tau\log A
\), where $U_k^\pi=\E_{P,\pi}[\sum_h\nu_{k,h}(S_h,A_h)]$. 

\paragraph{Task 2: Feature evaluation.}

Now fix the Gibbs policy $\pi$ constructed in the policy-optimization step, and estimate its policy feature $\phi^\pi$. Define vector $r_h^{\feat}(s,a)\in \R^p$ as in Line~\ref{line: rft definition}.  %
Then we have 
\[
 \E_{P,\pi}\!\left[\sum_{h=1}^H r_h^{\feat}(S_h,A_h)\right]
 =\bigl(
   \E_{P,\pi}[\phi_1(S_1,A_1)],\ldots,
   \E_{P,\pi}[\phi_H(S_H,A_H)]\bigr)
 =\phi^\pi.
\]
Thus estimating $\phi^\pi$ is a policy-evaluation problem with $p$-dimensional \emph{vector rewards}. We estimate $\phi^\pi$ by backward policy evaluation, obtaining $\widehat\phi\in\sqrt H\Ball_2^p$. A backward induction shows that the estimation error is controlled by $U_k^\pi$, giving
\(\|\widehat\phi-\phi^\pi\|_2\le\sqrt p\,\beta U_k^\pi.
\) Since $\norm{\bar c}_2\le1$, $\big|\bar c^\top\widehat\phi - \bar c^\top \phi^\pi \big| \le \sqrt p\,\beta U_k^\pi$. Adding this estimation error to the policy optimization bound \eqref{eq: loss decom} gives
\begin{align}
 \bar c^\top\widehat\phi-\big[(\sqrt p + 2\sqrt{H} + 2)\beta U_k^\pi+H\tau \log A\big]
 \le V_1^{\pl}(s_1)\le\min_{u\in\cK}\bar c^\top u.  \label{eq: delta tmp def}
\end{align}
Thus both inequalities in \eqref{eq:two-bound} hold if we choose $\Delta\ge 5\sqrt p\beta U_k^\pi+H\tau \log A$. However, the learner cannot compute this threshold because $U_k^\pi$ is an expectation under the unknown transitions. We next obtain an estimated upper bound of $U_k^\pi$, from which we can choose $\Delta$.

\paragraph{Task 3: Uncertainty evaluation.}
We use the same backward policy evaluation to estimate $U_k^\pi$. This can be achieved by viewing $\nu_{k,h}(s,a)\in[0,1]$ as the reward. Since $\|\widehat U_{h+1}\|_\infty\le H$, we add the bonus $(1+\beta H)\nu_{k,h}$ to guard against underestimating future uncertainty. A backward induction gives
\(
 U_k^\pi\le\widehat U_1(s_1)\le2\beta H U_k^\pi.
\)
Using the computable upper bound $\widehat U_1(s_1)$ in place of $U_k^\pi$ in the feature evaluation step, \certified$(c;(\cD_{k,h})_{h=1}^H)$ (\cref{alg:paper-support}) returns $(\pi,\widehat\phi,\Delta)$ with
\(
 \Delta=\min\{5\sqrt p\,\beta\widehat U_1(s_1)+H\tau \log A,2\sqrt H\}, 
\)
where the clipping to $2\sqrt{H}$ is because $\|\phi-\phi'\|_2\leq 2\sqrt{H}$ for any policy features $\phi, \phi'$. %

The above definition of $\Delta$ together with $\widehat U_1(s_1)\le2\beta H U_k^{\pi}$ gives
\eqref{eq:policy-certificate-bound}.  %

\subsection{Final Regret Bound} \label{sec: final analysis}
\begin{proof}[Proof of \cref{thm:paper-main}]
Now we combine elements to establish the final regret bound. The sum of misspecification in the ``bias + correction'' term in \cref{prop:paper-master} can be bounded as 
\begin{align*}
    &\E\left[\sum_{k=1}^K \frac{1}{2p}\sum_{i=1}^{2p}  \varepsilon_{k,i}^2\right]  \\ 
    &= \E\left[\sum_{k=1}^K  \left(4\zeta^2 + \frac{8R^2(1+\log M)}{M} + \frac{6}{M}\frac{1}{2p}\sum_{i=1}^{2p}  \sum_{j=1}^M\Delta_{k,i,j}^2\right)\right]  \tag{Line~\ref{line:compiler-return} of \cref{alg:compiler}} \\ 
    &\leq \widetilde{O}\left(H + \E\left[ \sum_{k=1}^K \Delta_{k,I_k, J_k}^2 \right]  \right)  \tag{$(I_k, J_k)$ uniformly drawn from $[2p]\times [M]$ in \cref{alg:paper-core}}\\ 
    &\leq \widetilde{O}\left(H + \E\left[ \sum_{k=1}^K p\beta^4 H^3 \sum_{h=1}^H  \|\phi_h(S_{k,h}, A_{k,h})\|_{\Lambda_{k,h}^{-1}}^2 \right]   \right) \tag{\cref{thm:certified-lmo}} \\ 
    &\leq \widetilde{O}\left(d^6H^5 \right),  \tag{Elliptical potential lemma: $\E\Big[\sum_{k,h} \|\phi_h(S_{k,h}, A_{k,h})\|_{\Lambda_{k,h}^{-1}}^2\Big]\leq\widetilde{O}(dH)$}
\end{align*}
Plugging this in \cref{prop:paper-master} gives the desired bound. 
\end{proof}

%% file: sections/main_06_computation.tex
\section{Computation}
\label{sec:paper-computation}

We bound the computation time of \cref{alg:paper-core} by analyzing
the costs of \Alg{MomentEstimate} and \compile, together with
the number of cuts.

The \Alg{MomentEstimate} procedure and the desired guarantees are same as those in \citep{ito2019oracle}. We directly use the results in their Section~4.2 to bound the complexity of \Alg{MomentEstimate} by $\poly(d,H,K,\log(1/\delta))$ calls to the membership oracle of $\cO$. 

The total number of cuts added in each episode is upper bounded by
$\widetilde O(p+\sqrt H K^{3/2})$ with high probability (see
\cref{lem:local-cut-progress}). To prevent the low probability event to
lead to non-termination of the algorithm, we present a version of
\cref{alg:paper-core} that explicitly counts the number of restarts and
executes the fallback plan if the count exceeds a certain threshold
(\cref{alg:master another}).

The \certified procedure (\cref{alg:lmo}) can be run in $\poly(d,H,K,A)$ time, even though the displayed \cref{alg:lmo} enumerates the updates on all state-action pairs. This is common in prior work for linear MDP (\cite{jin2020}). For completeness, we explicitly describe the implementation in \cref{app:complexity}.

%% file: appendices/contents.tex
\begingroup
\renewcommand{\contentsname}{Appendix contents}
\hypersetup{linktoc=all}
\tableofcontents
\endgroup
\clearpage

%% file: appendices/related_work.tex
\section{Related Work}
\label{app:related-work}

\subsection{Learning Adversarial MDPs with Linear Structure}
\label{app:related-linear-mdps}

We review results for learning adversarial rewards with fixed transition dynamics.

A closely related problem is adversarial linear contextual bandits with stochastic contexts, where contexts are drawn independently from a fixed distribution, while reward vectors may change adversarially. \citet{neu2020contextual} obtain efficient square-root regret when the context distribution is known. With access to a context simulator, \citet{dai2023refined} also achieve square-root regret without knowing this distribution. \citet{liu2023simulator} remove the simulator requirement and obtain $\sqrt K$ regret with computation polynomial in the number of available actions. \citet{vanerven2026reduction} obtains a reduction to linear bandits whose computation avoids enumerating the actions, and only requires linear optimization oracles for the action sets.

Linear mixture MDPs parameterize the transition kernel using known features of state-action-next-state triples, rather than the state-action features in linear MDPs. With unknown transitions and full-information reward feedback, \citet{cai2020oppo} establish $\sqrt K$ regret via optimistic policy optimization. \citet{he2022mixture} improve the horizon dependence and obtain near-minimax guarantees. Under per-step bandit feedback, \citet{zhao2023mixture} give the first efficient algorithm with $\sqrt K$ regret, and \citet{li2024mixture} subsequently improve its dependence on the number of states. Both bandit-feedback bounds retain polynomial dependence on state cardinality.

For adversarial linear MDPs, \citet{neu2021mdp} establish square-root regret with known transitions under an exploratory condition. With unknown transitions and full-information feedback, \citet{zhong2023} obtain $K^{3/4}$ and \citet{sherman2024optimal} improved the guarantee to $\sqrt K$. \citet{casselrosenberg2024} retain the square-root rate, improve the dimension and horizon dependencies, and remove the initial reward-free exploration phase in \citet{sherman2024optimal}. For bandit-feedback under a simulator access, \citet{luo2021policy} obtain $K^{2/3}$ regret, and \citet{dai2023refined} improve this to $\sqrt K$.  Without a simulator, learning unknown transitions from the learner's own trajectories is more difficult. Under bandit feedback, the efficient regret bounds progress from $K^{14/15}$ \citep{luo2021policy} to $K^{8/9}$ \citep{dai2023refined}, $K^{6/7}$ \citep{sherman2023improved}, and $K^{3/4}$ \citep{liu2024linear}. \citet{kong2024improved} obtain $K^{4/5}$ regret through a reduction to linear optimization under an exploratory assumption. \citet{liu2024linear} additionally achieve the optimal $\sqrt K$ rate without that assumption, but their algorithm is computationally inefficient. Our work obtains $\poly(d,H)\sqrt K$ regret with polynomial computation, and requires only the cummulative loss rather than the per-step losses along the trajectory.

\subsection{Adversarial RL beyond Linear MDPs}
\label{app:related-representations}

Low-rank MDPs extend linear MDPs by allowing the feature representation to be unknown. With unknown transitions and full-information feedback, \citet{zhao2023lowrank} and \citet{liu2024lowrank} give model-based, oracle-efficient algorithms with $K^{5/6}$ and $K^{2/3}$ regret, respectively, allowing arbitrary rewards. Under bandit feedback, \citet{liu2024lowrank} show that additional reward structure is necessary to avoid polynomial dependence on the number of states. Assuming linear rewards, they obtain $K^{2/3}$ regret through computationally inefficient model-based and model-free algorithms, and $K^{4/5}$ regret through oracle-efficient model-free algorithms.

The decision-estimation coefficient (DEC) approach provides a general approach to adversarial RL. \citet{foster2022adversarial, xu2025bayesian} characterize adversarial decision making using the DEC of the convexified model class, but they can only handle MDPs with known transitions. 
\citet{liu2025hybrid} extend the DEC framework to MDPs with fixed unknown transitions and adversarial rewards, providing general model-based regret guarantees. As an application, their framework yields $\sqrt K$ regret for low-rank MDPs with adversarial linear rewards under both full-information and bandit feedback.  For model-free learning with full-information reward feedback, they establish $K^{3/4}$ and $K^{5/6}$ regret for on-policy and off-policy bilinear classes \citep{du2021bilinear}, respectively.

\citet{liu2026digdec} extends model-free learning to adversarial bandit feedback under linear rewards. For on-policy and off-policy bilinear classes, they obtain $K^{5/6}$ and $K^{13/15}$ regret, respectively. With Bellman completeness, these rates improve to $K^{3/4}$ and $K^{4/5}$; they also obtain $K^{3/4}$ regret for coverable MDPs \citep{xie2022role} under Bellman completeness. These results cover transition structures beyond linear MDPs, but do not provide the polynomial-time square-root guarantee studied here.

\clearpage

%% file: appendices/03_master_algorithm.tex
\section{Complete Algorithm}
\label{sec:master-algorithm}
\label{sec:master-program}

\cref{alg:paper-core} involves a repeat-until loop that may not stop, though this happens only with negligible probability (\cref{lem:local-cut-progress}). In order to ensure the algorithm will terminate with probability $1$, we present below in \cref{alg:master another} a more complete version that caps the number of repeats by a finite amount. As shown in \cref{lem:local-cut-progress}, with high probability, the number of repeats is smaller than the threshold. In the regret analysis (\cref{prop:paper-master}), the bad event only contributes to negligible expected regret. In \cref{alg:master another}, we highlight the parts that are different from \cref{alg:paper-core}. 

\begin{algorithm}[!ht]
\caption{\cref{alg:paper-core} with a restart cap and a fallback policy}
\label{alg:master another}
\begin{algorithmic}[1]
\Require $c_0=\frac{1}{16e}$, $\zeta=K^{-1/2}$,
$R=\sqrt H+\zeta$, $M=K$,
\textcolor{magenta}{$N_{\max}=\widetilde{\Theta}(p+\sqrt{HK^3})$},
$\eta\leq \frac{c_0}{128\sqrt{d^3H^4}}$.
\State $Z_0\gets0$; $\cO\gets R\Ball_2^p$;
$\cD_{1,h}\gets\varnothing$ for all $h\in[H]$
\For{$k=1,\ldots,K$}
  \State \textcolor{magenta}{$n\gets0$}
  \Repeat
    \State \textcolor{magenta}{$n\gets n+1$}
    \State \textit{restart}$\gets\Alg{false}$
    \State Define
    $q_k(x)\propto\exp(-\eta Z_{k-1}^\top x)\ind\{x\in\cO\}$
    \State $(\widehat\mu_k,\widehat\Sigma_k)
    \gets\Alg{MomentEstimate}(\cO,Z_{k-1})$
    \State Let $\{Y_{k,i}\}_{i=1}^{2p}$ be the set
    $
      \left\{
      \widehat\mu_k+c_0\widehat\Sigma_k^{1/2}v:
      v\in\{\pm e_1,\pm e_2,\ldots,\pm e_p\}
      \right\}.
    $
    \For{$i=1,\ldots,2p$}
      \State $\mathsf{reply}\gets
      \compile(Y_{k,i},(\cD_{k,h})_{h=1}^H,M,\zeta)$
      \If{$\mathsf{reply}$ is a cut $(c,b)$}
        \State $\cO\gets\cO\cap\{x:c^\top x\ge b\}$
        \State \textit{restart}$\gets\Alg{true}$; \textbf{break}
      \Else
        \State $((\pi_{k,i,j})_{j=1}^M,\varepsilon_{k,i})
        \gets\mathsf{reply}$
      \EndIf
    \EndFor
  \Until{\textit{restart}$=\Alg{false}$
  \textcolor{magenta}{or $n=N_{\max}$}}
  \color{magenta}
  \If{$n=N_{\max}$}
    \State Execute the policy $\pi_k$ with
    $\pi_{k,h}(\cdot\mid s)=\operatorname{Unif}(\cA)$ for all $h,s$.
    \State $Z_k\gets Z_{k-1}$ and
    $\cD_{k+1,h}\gets\cD_{k,h}$ for all $h\in[H]$
  \Else
    \color{black}
    \State Draw $(I_k,J_k)\sim\operatorname{Unif}([2p]\times[M])$
    and execute $\pi_k=\pi_{k,I_k,J_k}$.
    \State Observe feature vectors along the trajectory
    $(\phi_{k,h})_{h=1}^H$ and the total loss $L_k$.
    \State $\widehat\theta_k\gets
    \left(\zeta^2\mathbf{I}_p+\frac{c_0^2}{p}\widehat\Sigma_k\right)^{-1}
    (Y_{k,I_k}-\widehat\mu_k)L_k$
    \State $Z_k\gets Z_{k-1}+\widehat\theta_k-B_k$, where
    $B_k=\frac{10\eta p\varepsilon_k}{c_0}Z_{k-1}$ and
    $\varepsilon_k=\sqrt{\frac1{2p}\sum_{i=1}^{2p}\varepsilon_{k,i}^2}$.
    \State $\cD_{k+1,h}\gets\cD_{k,h}
    \cup\{(\phi_{k,h},S_{k,h+1})\}$ for all $h\in[H]$
    \color{magenta}
  \EndIf
  \color{black}
\EndFor
\end{algorithmic}
\end{algorithm}

%% file: appendices/E_outer_details.tex
\newpage
\section{Omitted Proofs in \cref{sec:paper-algorithm}}
\label{app:outer-details}

\subsection{Properties of continuous exponential weights}

For a full-dimensional compact convex body $\cO\subset\R^p$, define
\begin{equation}
  \Phi_{\cO}(v)=\log\int_{\cO}e^{v^\top x}\,dx,
  \qquad
  F_{\cO}=\Phi_{\cO}^*.
  \label{eq:entropic-barrier-def}
\end{equation}

Here $F_{\cO}(x)=\sup_v\{v^\top x-\Phi_{\cO}(v)\}$ is the entropic barrier.

\begin{lemma}[Entropic-barrier toolkit]
\label{lem:entropic-toolkit}
Let $X_v$ have density proportional to $e^{v^\top x}\ind\{x\in\cO\}$.  Then
\begin{equation}
  \nabla\Phi_{\cO}(v)=\E\left[X_v\right],
  \qquad
  \nabla^2\Phi_{\cO}(v)=\operatorname{Cov}(X_v).
  \label{eq:partition-derivatives}
\end{equation}
At every conjugate pair $x=\nabla\Phi_{\cO}(v)$,
\begin{equation}
  \nabla F_{\cO}(x)=v,
  \qquad
  \nabla^2F_{\cO}(x)=\bigl(\nabla^2\Phi_{\cO}(v)\bigr)^{-1}.
  \label{eq:barrier-duality}
\end{equation}
Moreover, for every $x,u\in\operatorname{int}(\cO)$,
\begin{align}
  \norm{x-u}_{\nabla^2F_{\cO}(x)}
  &\le4\ip{x-u}{\nabla F_{\cO}(x)}+24p,
  \label{eq:barrier-distance-comparison}\\
  \norm{\nabla F_{\cO}(x)}_{(\nabla^2F_{\cO}(x))^{-1}}
  &\le\sqrt p.
  \label{eq:barrier-gradient-bound}
\end{align}
Finally, if $r=\norm{\delta v}_{\nabla^2\Phi_{\cO}(v)}<1$, then
\begin{equation}
  \Phi_{\cO}(v+\delta v)-\Phi_{\cO}(v)
  -\ip{\nabla\Phi_{\cO}(v)}{\delta v}
  \le-r-\log(1-r).
  \label{eq:selfconcordant-taylor}
\end{equation}
\end{lemma}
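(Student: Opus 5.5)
The toolkit collects standard facts about the log-partition function and its conjugate, the entropic barrier. The plan is to prove each item directly from exponential-family calculus and convex duality, and to import the two self-concordance inequalities from \citet{bubeck2014entropic}.

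\textbf{Derivatives and duality.} Since $\cO$ is compact, we may differentiate $\Phi_{\cO}(v)=\log\int_{\cO}e^{v^\top x}dx$ under the integral sign. The gradient is $\int x e^{v^\top x}/\int e^{v^\top x}=\E[X_v]$, and the Hessian is $\E[X_vX_v^\top]-\E[X_v]\E[X_v]^\top=\operatorname{Cov}(X_v)$. Because $\cO$ is full-dimensional, this covariance is positive definite. Hence $\Phi_{\cO}$ is strictly convex, smooth and Legendre on $\R^p$, and $\nabla\Phi_{\cO}$ maps $\R^p$ bijectively onto $\operatorname{int}(\cO)$; this identification of the range is a standard exponential-family fact. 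Conjugate duality of Legendre functions then gives $\nabla F_{\cO}=(\nabla\Phi_{\cO})^{-1}$. This yields $\nabla F_{\cO}(x)=v$ at conjugate pairs. Differentiating the identity $\nabla F_{\cO}(\nabla\Phi_{\cO}(v))=v$ via the inverse function theorem gives $\nabla^2F_{\cO}(x)=(\nabla^2\Phi_{\cO}(v))^{-1}$.

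\textbf{Barrier inequalities.} Here we invoke \citet{bubeck2014entropic}: $F_{\cO}$ is a self-concordant barrier on $\operatorname{int}(\cO)$ with parameter $\vartheta\le p(1+o(1))$, with the sharp constant $\vartheta=p$ available from later work. We can also derive \eqref{eq:barrier-gradient-bound} directly. By duality, $\norm{\nabla F_{\cO}(x)}^2_{(\nabla^2F_{\cO}(x))^{-1}}=v^\top\operatorname{Cov}(X_v)v=\operatorname{Var}(v^\top X_v)$. The variable $v^\top X_v$ is the projection of a log-concave density of the form $e^{t}$ times the marginal. For any distribution with density $\propto e^{\langle v,x\rangle}$ on a convex body, $\operatorname{Var}(\langle v,X\rangle)\le p$; this is exactly the $\vartheta$-bound of Bubeck--Eldan. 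For \eqref{eq:barrier-distance-comparison}, I would use the standard $\vartheta$-self-concordant barrier inequality for $x,u\in\operatorname{int}\cO$:
\[
\norm{u-x}_{\nabla^2F(x)}\le\vartheta+2\sqrt{\vartheta}\quad\text{whenever }\ip{\nabla F(x)}{u-x}\ge0.
\]
Its generalization (Nesterov--Nemirovski) is
\[
\norm{u-x}_{x}\le \vartheta+2\sqrt\vartheta+\ip{\nabla F(x)}{x-u}\cdot\text{const}.
\]
This can be obtained by restricting $F$ to the line through $x$ and $u$ and comparing with the Dikin ellipsoid and the symmetric-point bound. Choosing constants generously gives $4\ip{x-u}{\nabla F}+24p$.

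\textbf{Taylor bound.} For \eqref{eq:selfconcordant-taylor}, I would note that $\Phi_{\cO}$ is itself self-concordant, since the conjugate of a self-concordant function is self-concordant. Then apply the standard upper bound $f(v+\delta)\le f(v)+\ip{\nabla f}{\delta}+\omega_*(r)$ with $\omega_*(r)=-r-\log(1-r)$ for $r<1$, as in Nesterov's Theorem 4.1.8.

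\textbf{Main obstacle.} The delicate step is \eqref{eq:barrier-distance-comparison} with explicit constants. The first approach I would try is a one-dimensional reduction. Set $g(t)=F(x+t(u-x))$ and use $g''\ge (g')^2/\vartheta$ to control where $g'$ can be large. Combine this with the Dikin-ellipsoid containment $\{y:\norm{y-x}_x<1\}\subseteq\cO$ to bound $\norm{u-x}_x$ by the directional derivative plus $O(\vartheta)$.
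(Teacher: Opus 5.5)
Your proposal follows the paper's proof: differentiation under the integral for \eqref{eq:partition-derivatives}, inverse-gradient (Legendre) duality for \eqref{eq:barrier-duality}, the sharp $p$-self-concordance of the entropic barrier (the ``later work'' is \citet[Theorem~2]{chewi2023}) for \eqref{eq:barrier-gradient-bound}, and self-concordance of the conjugate $\Phi_{\cO}=F_{\cO}^*$ plus the standard local Taylor bound for \eqref{eq:selfconcordant-taylor}. The only difference is \eqref{eq:barrier-distance-comparison}, which the paper does not re-derive but imports from \citet[Lemma~14]{zimmert2022} as $\norm{x-u}_{\nabla^2F_{\cO}(x)}\le\gamma'\ip{x-u}{\nabla F_{\cO}(x)}+4\gamma'p+2\sqrt p$ with $\gamma'\in[1,4]$, then absorbs $4\gamma'p+2\sqrt p\le6\gamma'p$, so your ``main obstacle'' is settled by that citation. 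If you instead complete your line-restriction sketch, the additive term must grow with the multiplier: with additive term exactly $\vartheta+2\sqrt\vartheta$, your displayed ``generalization'' fails for any fixed multiplier above $1$ once $\nu$ is large (e.g.\ $-\nu\log(1+\epsilon-t)-\log(t+L)$ at $x=0$, $u=1$).
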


\begin{proof}
\Cref{eq:partition-derivatives} follows by differentiation. Inverse-gradient duality gives \eqref{eq:barrier-duality} \citep[Section~2]{chewi2023}.

By \citet[Theorem~2]{chewi2023}, $F_{\cO}$ is a $p$-self-concordant barrier, giving \eqref{eq:barrier-gradient-bound}. By \citet[Lemma~14]{zimmert2022}, some $\gamma'\in[1,4]$ satisfies
\[
  \norm{x-u}_{\nabla^2F_{\cO}(x)}
  \le
  \gamma'\ip{x-u}{\nabla F_{\cO}(x)}+4\gamma'p+2\sqrt p.
\]
For $p,\gamma'\ge1$, $4\gamma'p+2\sqrt p\le6\gamma'p$. Thus
\[
 0\le\norm{x-u}_{\nabla^2F_{\cO}(x)}
 \le\gamma'\bigl(\ip{x-u}{\nabla F_{\cO}(x)}+6p\bigr)
 \le4\bigl(\ip{x-u}{\nabla F_{\cO}(x)}+6p\bigr),
\]
proving \eqref{eq:barrier-distance-comparison}.

Since $\Phi_{\cO}=F_{\cO}^*$ and self-concordance is preserved under
convex conjugation \citep[Section~2.1.3]{nemirovski2008},
$\Phi_{\cO}$ is self-concordant. Its local Taylor bound
\citep[Section~2.1.1, Eq.~(2.5)]{nemirovski2008} gives
\eqref{eq:selfconcordant-taylor}.
\end{proof}

\subsection{Bounding bias and bonus}
\label{sec:one-round-estimation}

\begin{lemma}[Loss-estimator bias and local norm]
\label{lem:corrected-estimator}
Assume $\frac12\Sigma_k\preceq\widehat\Sigma_k\preceq2\Sigma_k$, and write $\widetilde\theta_k = \widehat\theta_k - B_k$. 
Then the loss estimator satisfies
\begin{equation}
  \|\theta_k-\E_k[\widehat\theta_k]\|_{\Sigma_k}
  \le\frac{5p\varepsilon_k}{2c_0},
  \qquad
  \|\widehat\theta_k\|_{\Sigma_k}^2
  \le\frac{2H^2p^2}{c_0^2}, \qquad \|\widetilde\theta_k\|_{\Sigma_k}
  \le \frac{\sqrt2Hp}{c_0}
       +\frac{10p^{3/2}\varepsilon_k}{c_0}. 
  \label{eq:robust-bias-leverage-detail}
\end{equation}
\end{lemma}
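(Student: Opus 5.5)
Write $G_k=\zeta^2\bI_p+\frac{c_0^2}{p}\widehat\Sigma_k$, so that $\widehat\theta_k=G_k^{-1}(Y_{k,I_k}-\widehat\mu_k)L_k$. The plan is to compute $\E_k[\widehat\theta_k]$ exactly, subtract $\theta_k$, and then bound each of the three quantities using only $\frac12\Sigma_k\preceq\widehat\Sigma_k\preceq2\Sigma_k$ and $\|\theta_k\|_2\le\sqrt p$.

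\textbf{Step 1: the conditional mean.} Condition on $\cF_k$ and average over $J_k$. This gives $\E_k[L_k\mid I_k=i]=\inner{x_{k,i},\theta_k}$. Set $D_i=Y_{k,i}-\widehat\mu_k=c_0\widehat\Sigma_k^{1/2}v_i$, and note that $\frac1{2p}\sum_i D_iD_i^\top=\frac{c_0^2}{p}\widehat\Sigma_k$ and $\sum_i D_i=0$. Then
\[
\E_k[\widehat\theta_k]=G_k^{-1}\tfrac1{2p}\textstyle\sum_i D_i\inner{x_{k,i},\theta_k}.
\]
Write $x_{k,i}=Y_{k,i}+e_i$ with $\|e_i\|_2\le\varepsilon_{k,i}$ (by \cref{lem: approximate error lem}). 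The $\widehat\mu_k$ part cancels because $\sum_i D_i=0$. This leaves
\[
\E_k[\widehat\theta_k]=G_k^{-1}\tfrac{c_0^2}{p}\widehat\Sigma_k\theta_k+G_k^{-1}\tfrac1{2p}\textstyle\sum_i D_i\inner{e_i,\theta_k}.
\]
Hence the bias is
\[
\theta_k-\E_k[\widehat\theta_k]=\zeta^2G_k^{-1}\theta_k-G_k^{-1}\tfrac1{2p}\textstyle\sum_i D_i\inner{e_i,\theta_k}.
\]

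\textbf{Step 2: the misspecification term.} Since $G_k\succeq\frac{c_0^2}{p}\widehat\Sigma_k\succeq\frac{c_0^2}{2p}\Sigma_k$, we have $\Sigma_k\preceq\frac{2p}{c_0^2}G_k$. Therefore $\|G_k^{-1}w\|_{\Sigma_k}\le\sqrt{2p}/c_0\cdot\|w\|_{G_k^{-1}}$ for any $w$. Apply Cauchy--Schwarz with $w=\frac1{2p}\sum_i D_i a_i$, where $a_i=\inner{e_i,\theta_k}$:
\[
\|w\|_{G_k^{-1}}^2\le\Big(\tfrac1{2p}\textstyle\sum_i a_i^2\Big)\Big(\tfrac1{2p}\sum_i\|D_i\|_{G_k^{-1}}^2\Big).
\]
The second factor is $\frac1{2p}\sum_i\|D_i\|^2_{G_k^{-1}}=\operatorname{tr}(G_k^{-1}\frac{c_0^2}{p}\widehat\Sigma_k)\le p$. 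The first factor is at most $p\varepsilon_k^2$. So this term is at most $\sqrt{2p}\cdot p\varepsilon_k/c_0$, i.e.\ order $p^{3/2}\varepsilon_k/c_0$.

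\textbf{Step 3: the regularization term.} We need $\zeta^2\|G_k^{-1}\theta_k\|_{\Sigma_k}\le\zeta\sqrt{2p}/c_0\cdot\|\theta_k\|_2$, using $\zeta^2G_k^{-1}\preceq\zeta\,G_k^{-1/2}$ up to scaling. This gives a bound of order $\zeta p/c_0$.

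This is the step I expect to be the main obstacle. The statement's bound is purely proportional to $\varepsilon_k$, so the $\zeta$-term has to be absorbed. That works because the certificate satisfies $\varepsilon_{k,i}\ge2\zeta$ by its definition in \cref{alg:compiler}, hence $\varepsilon_k\ge2\zeta$. Adding the two pieces gives roughly $(\sqrt2\,p^{3/2}+p/2)\varepsilon_k/c_0$. This is within the stated $\frac{5p}{2c_0}\varepsilon_k$ only if the constants are tracked more carefully (e.g.\ via $\|\theta_k\|_{\Sigma_k}$ bounds). If the $\sqrt p$ factor cannot be removed, I would check whether the intended first bound is $\frac{5p^{3/2}}{2c_0}\varepsilon_k$; the third claimed bound is consistent with that scaling.

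\textbf{Step 4: the remaining two bounds.} For the local norm, $|L_k|\le H$, and as in Step 2
\[
\|\widehat\theta_k\|^2_{\Sigma_k}\le\tfrac{2p}{c_0^2}H^2\|D_{I_k}\|^2_{G_k^{-1}}.
\]
Each single $\|D_i\|^2_{G_k^{-1}}\le c_0^2v_i^\top\widehat\Sigma_k^{1/2}G_k^{-1}\widehat\Sigma_k^{1/2}v_i\le p$, which yields $2H^2p^2/c_0^2$. For the third bound, use the triangle inequality with $\|B_k\|_{\Sigma_k}=\frac{10\eta p\varepsilon_k}{c_0}\|Z_{k-1}\|_{\Sigma_k}$. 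By \eqref{eq:barrier-duality}, $\eta Z_{k-1}=-\nabla F_{\cO}(\mu_k)$ and $\Sigma_k=(\nabla^2F_{\cO}(\mu_k))^{-1}$. Then \eqref{eq:barrier-gradient-bound} gives $\eta\|Z_{k-1}\|_{\Sigma_k}\le\sqrt p$, hence $\|B_k\|_{\Sigma_k}\le10p^{3/2}\varepsilon_k/c_0$.
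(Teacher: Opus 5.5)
Your Steps~1, 3 and~4 coincide with the paper's proof. This includes the bias decomposition $\theta_k-\E_k[\widehat\theta_k]=\zeta^2G_k^{-1}\theta_k-G_k^{-1}w$ (your $w$ is the paper's $m_k$), absorbing the $\zeta$-term via $\varepsilon_k\ge2\zeta$, the single-point leverage bound, and the use of \eqref{eq:barrier-duality} and \eqref{eq:barrier-gradient-bound} for $\|B_k\|_{\Sigma_k}$.

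The gap is Step~2. It only yields $\sqrt2\,p^{3/2}\varepsilon_k/c_0$, so the first bound is not established. It is your argument that is off, not the statement. You lose the $\sqrt p$ by bounding $\|w\|_{G_k^{-1}}\le\frac1{2p}\sum_i|a_i|\,\|D_i\|_{G_k^{-1}}$ and then summing $\|D_i\|_{G_k^{-1}}^2$. That sum is the trace $\operatorname{tr}\bigl(G_k^{-1}\tfrac{c_0^2}{p}\widehat\Sigma_k\bigr)\le p$. All you need is the matrix inequality $\frac1{2p}\sum_iD_iD_i^\top=\frac{c_0^2}{p}\widehat\Sigma_k\preceq G_k$, i.e.\ an operator-norm bound of $1$. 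So apply Cauchy--Schwarz over $i$ after testing $w$ against a direction: for every $v\in\R^p$,
\[
\langle v,w\rangle=\frac1{2p}\sum_{i=1}^{2p}(v^\top D_i)\,a_i\le\Bigl(v^\top\tfrac{c_0^2}{p}\widehat\Sigma_kv\Bigr)^{1/2}\Bigl(\frac1{2p}\sum_{i=1}^{2p}a_i^2\Bigr)^{1/2}\le\|v\|_{G_k}\sqrt p\,\varepsilon_k .
\]
Hence $\|w\|_{G_k^{-1}}=\sup_{\|v\|_{G_k}\le1}\langle v,w\rangle\le\sqrt p\,\varepsilon_k$, and your own factor $\sqrt{2p}/c_0$ gives $\sqrt2\,p\,\varepsilon_k/c_0$. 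Add the $\zeta$-term, which is $\sqrt2\,p\zeta/c_0\le\sqrt2\,p\varepsilon_k/(2c_0)$ (not $p\varepsilon_k/(2c_0)$ as you wrote). The total is $\frac{3\sqrt2}{2}\cdot\frac{p\varepsilon_k}{c_0}\le\frac{5p\varepsilon_k}{2c_0}$.

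The paper does the same thing in different coordinates. It shows $|\langle v,m_k\rangle|\le c_0\|v\|_{\widehat\Sigma_k}\varepsilon_k$, hence $\|m_k\|_{\widehat\Sigma_k^{-1}}\le c_0\varepsilon_k$. It then uses $G_k^{-1}\widehat\Sigma_kG_k^{-1}\preceq(p/c_0^2)^2\widehat\Sigma_k^{-1}$ together with $\Sigma_k\preceq2\widehat\Sigma_k$.

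Your fallback reading, that the intended bound is $p^{3/2}$, is also mistaken, and the $\sqrt p$ is not cosmetic. The $p^{3/2}$ in the third bound comes entirely from $\|B_k\|_{\Sigma_k}$, i.e.\ from the $\sqrt p$ self-concordance factor in \eqref{eq:barrier-gradient-bound}, not from the bias. Moreover, the bonus $B_k=\frac{10\eta p\varepsilon_k}{c_0}Z_{k-1}$ in \cref{alg:paper-core} is calibrated to the $p$-scaling. In \cref{lem:bias-bonus-compensation}, the factor $4$ from \eqref{eq:barrier-distance-comparison} turns $\frac{5p\varepsilon_k}{2c_0}$ into exactly $\frac{10p\varepsilon_k}{c_0}$. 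That is what makes the comparator-dependent term cancel against $\langle\mu_k-u,B_k\rangle$. With only a $p^{3/2}$ bias bound, this cancellation fails for the algorithm as specified.
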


\begin{proof}[Proof of \cref{lem:corrected-estimator}]
Define $\alpha=c_0^2/p$. 
From the estimator definition,
\begin{align*}
 \theta_k-\widehat\theta_k
 ={}&(\zeta^2\mathbf{I}_p+\alpha\widehat\Sigma_k)^{-1}
 \Bigl((\zeta^2\mathbf{I}_p+\alpha\widehat\Sigma_k)\theta_k
       -(Y_{k,I_k}-\widehat\mu_k)L_k\Bigr).
\end{align*}
Our construction gives
\begin{equation*}
 \frac1{2p}\sum_{i=1}^{2p}(Y_{k,i}-\widehat\mu_k)=0,
 \qquad
 \frac1{2p}\sum_{i=1}^{2p}
 (Y_{k,i}-\widehat\mu_k)(Y_{k,i}-\widehat\mu_k)^\top
 =\alpha\widehat\Sigma_k. 
\end{equation*}
Define
\begin{equation*}
 m_k:=\frac1{2p}\sum_{i=1}^{2p}
 (Y_{k,i}-\widehat\mu_k)(x_{k,i}-Y_{k,i})^\top\theta_k.
\end{equation*}
Using
$\E_k[L_k\mid I_k=i]=\langle x_{k,i},\theta_k\rangle$, we obtain
\begin{align*}
 \E_k[(Y_{k,I_k}-\widehat\mu_k)L_k]  
 &=\frac1{2p}\sum_{i=1}^{2p}
   (Y_{k,i}-\widehat\mu_k)x_{k,i}^\top\theta_k\\ 
   &=\alpha\widehat\Sigma_k\theta_k+m_k. 
\end{align*}
Taking $\E_k$ in the first display therefore gives
\begin{equation}
 \theta_k-\E_k[\widehat\theta_k]
 = (\zeta^2\mathbf{I}_p+\alpha\widehat\Sigma_k)^{-1}
   (\zeta^2\theta_k-m_k). \label{eq: bias deom}
\end{equation}

For any $v\in\R^p$, Cauchy--Schwarz for the uniform average over
$i\in[2p]$ gives
\begin{align*}
    \left|\inner{v,m_k}\right| &= \left|\frac1{2p}\sum_{i=1}^{2p}
 v^\top (Y_{k,i}-\widehat\mu_k)(x_{k,i}-Y_{k,i})^\top\theta_k \right|\\ 
   &\leq \sqrt{\frac{1}{2p}\sum_{i=1}^{2p} v^\top (Y_{k,i}-\widehat\mu_k)(Y_{k,i}-\widehat\mu_k)^\top v}\sqrt{\frac{1}{2p}\sum_{i=1}^{2p} \theta_k^\top (x_{k,i}-Y_{k,i})(x_{k,i} - Y_{k,i})^\top \theta_k}  \\ 
   &\leq \sqrt{\alpha} \|v\|_{\widehat\Sigma_k}\sqrt{\frac{p}{2p}\sum_{i=1}^{2p}\|x_{k,i}-Y_{k,i}\|_2^2}   \tag{$\|\theta_k\|_2^2\leq dH=p$} \\ 
   &= c_0\|v\|_{\widehat\Sigma_k}\varepsilon_k.   \tag{$\alpha = c_0^2/p$}
\end{align*}
Taking $v = \widehat\Sigma_k^{-1}m_k$, we get $\|m_k\|^2_{\widehat\Sigma_k^{-1}} \leq c_0 \|m_k\|_{\widehat\Sigma_k^{-1}}\varepsilon_k$ and hence 
\begin{align*}
    \|m_k\|_{\widehat\Sigma_k^{-1}} \leq c_0 \varepsilon_k. 
\end{align*}
Since $\Sigma_k\preceq2\widehat\Sigma_k$, we have  
\begin{align*}
 \left\|(\zeta^2\mathbf{I}_p+\alpha\widehat\Sigma_k)^{-1}
   m_k\right\|_{\Sigma_k}^2 &\leq 2\left\|(\zeta^2\mathbf{I}_p+\alpha\widehat\Sigma_k)^{-1}
   m_k\right\|_{\widehat\Sigma_k}^2\\
 &\le \frac{2}{\alpha^2} \|m_k\|_{\widehat\Sigma_k^{-1}}^2
 \leq \frac{2c_0^2\varepsilon_k^2}{\alpha^2}
 = \frac{2p^2\varepsilon_k^2}{c_0^2}.
\end{align*}
We also have 
\begin{align*}
 \left\|\zeta^2
   (\zeta^2\mathbf{I}_p+\alpha\widehat\Sigma_k)^{-1}
   \theta_k\right\|_{\Sigma_k}^2
 &\le2\zeta^4\theta_k^\top
   (\zeta^2\mathbf{I}_p+\alpha\widehat\Sigma_k)^{-1}
   \widehat\Sigma_k
   (\zeta^2\mathbf{I}_p+\alpha\widehat\Sigma_k)^{-1}\theta_k\\
   &\le \frac{2}{\alpha}\zeta^4\theta_k^\top
   (\zeta^2\mathbf{I}_p+\alpha\widehat\Sigma_k)^{-1/2}
   (\zeta^2\mathbf{I}_p+\alpha\widehat\Sigma_k)^{-1/2}\theta_k\\
 &\le\frac{2\zeta^2}{\alpha}\|\theta_k\|_2^2
 =\frac{2\zeta^2p^2}{c_0^2}.
\end{align*}
Combining these two bounds in \eqref{eq: bias deom} and using
$\varepsilon_k\ge2\zeta$ yields
\begin{equation*}
 \|\theta_k-\E_k[\widehat\theta_k]\|_{\Sigma_k}
 \le\frac{\sqrt2p\varepsilon_k}{c_0}
    +\frac{\sqrt{2}p\zeta}{c_0}
 \le\frac{5p\varepsilon_k}{2c_0}.
\end{equation*}

Moreover, $0\le L_k\le H$ and the leverage identity above imply
\begin{align*}
 \|\widehat\theta_k\|_{\Sigma_k}^2
 &\le2H^2(Y_{k,I_k}-\widehat\mu_k)^\top
   (\zeta^2\mathbf{I}_p+\alpha\widehat\Sigma_k)^{-1}
   \widehat\Sigma_k
   (\zeta^2\mathbf{I}_p+\alpha\widehat\Sigma_k)^{-1}
   (Y_{k,I_k}-\widehat\mu_k)\\
 &\le\frac{2H^2}{\alpha}
   (Y_{k,I_k}-\widehat\mu_k)^\top
   (\alpha\widehat\Sigma_k)^{-1}
   (Y_{k,I_k}-\widehat\mu_k)
 \le\frac{2H^2p^2}{c_0^2}.
\end{align*}
This proves the first two bounds in \eqref{eq:robust-bias-leverage-detail}. 

By \eqref{eq:partition-derivatives} and \eqref{eq:barrier-duality},
\begin{equation*}
 \nabla F_{\cO_k}(\mu_k)=-\eta Z_{k-1},
 \qquad
 \nabla^2F_{\cO_k}(\mu_k)=\Sigma_k^{-1},
\end{equation*}
and hence
$B_k=-10p\varepsilon_k\nabla F_{\cO_k}(\mu_k)/c_0$.
Equation~\eqref{eq:barrier-gradient-bound} gives
\begin{equation*}
 \|B_k\|_{\Sigma_k}
 \le\frac{10p^{3/2}\varepsilon_k}{c_0}.
\end{equation*}
The triangle inequality and $(a+b)^2\le2a^2+2b^2$, together with the
bound on $\|\widehat\theta_k\|_{\Sigma_k}$ above, prove the last bound in \eqref{eq:robust-bias-leverage-detail}. 
\end{proof}

\begin{lemma}[Bias--bonus compensation]
\label{lem:bias-bonus-compensation}
Under the assumptions of \cref{lem:corrected-estimator}, for every
$u\in\cO_k$, it holds that
\begin{equation}
  \ip{\mu_k-u}{\theta_k}
  \le\E_k[\inner{\mu_k-u,\widetilde\theta_k}]
       +\frac{60p^2\varepsilon_k}{c_0}.
  \label{eq:corrected-bias-cancellation}
\end{equation}
\end{lemma}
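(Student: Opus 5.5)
The plan is to write the left-hand side minus the perceived regret as the last two terms of \eqref{eq: bias decompose}. I then bound the loss-estimator bias by the local norm $\|\mu_k-u\|_{\Sigma_k^{-1}}$, convert that norm into a linear term via the entropic-barrier toolkit, and check that this linear term is cancelled exactly by the bonus.

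\emph{Step 1 (decomposition).} Since $B_k=\frac{10\eta p\varepsilon_k}{c_0}Z_{k-1}$ depends only on $Z_{k-1}$ and on the certificates $\varepsilon_{k,i}$, it is $\cF_k$-measurable, and so is $\mu_k$. Hence $\E_k[\inner{\mu_k-u,\widetilde\theta_k}]=\inner{\mu_k-u,\E_k[\widehat\theta_k]}-\inner{\mu_k-u,B_k}$. This gives
\[
\ip{\mu_k-u}{\theta_k}-\E_k[\inner{\mu_k-u,\widetilde\theta_k}]
=\ip{\mu_k-u}{\theta_k-\E_k[\widehat\theta_k]}+\ip{\mu_k-u}{B_k}.
\]

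\emph{Step 2 (bias term).} By Cauchy--Schwarz in the $\Sigma_k$ / $\Sigma_k^{-1}$ pairing and the first bound of \cref{lem:corrected-estimator}, the first term is at most $\frac{5p\varepsilon_k}{2c_0}\|\mu_k-u\|_{\Sigma_k^{-1}}$. As in the proof of \cref{lem:corrected-estimator}, \eqref{eq:partition-derivatives} and \eqref{eq:barrier-duality} with $v=-\eta Z_{k-1}$ give $\nabla F_{\cO_k}(\mu_k)=-\eta Z_{k-1}$ and $\nabla^2F_{\cO_k}(\mu_k)=\Sigma_k^{-1}$. Applying \eqref{eq:barrier-distance-comparison} at $x=\mu_k$ therefore yields
\[
\|\mu_k-u\|_{\Sigma_k^{-1}}\le -4\eta\ip{\mu_k-u}{Z_{k-1}}+24p.
\]
Multiplying by $\frac{5p\varepsilon_k}{2c_0}\ge0$ bounds the bias term by $-\frac{10\eta p\varepsilon_k}{c_0}\ip{\mu_k-u}{Z_{k-1}}+\frac{60p^2\varepsilon_k}{c_0}$.

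\emph{Step 3 (cancellation).} The correction term equals $\ip{\mu_k-u}{B_k}=\frac{10\eta p\varepsilon_k}{c_0}\ip{\mu_k-u}{Z_{k-1}}$. This is exactly the negative of the comparator-dependent part from Step 2, so summing leaves $\frac{60p^2\varepsilon_k}{c_0}$, which is \eqref{eq:corrected-bias-cancellation}.

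The only delicate point is that \eqref{eq:barrier-distance-comparison} is stated for $x,u\in\operatorname{int}(\cO)$, while the lemma allows any $u\in\cO_k$. The mean $\mu_k$ of a log-concave density with full-dimensional support lies in the interior. For boundary $u$, I would apply the argument to interior points $u_n\to u$; both sides of \eqref{eq:corrected-bias-cancellation} are affine in $u$, hence continuous, so the inequality passes to the limit. The cancellation itself is exact, with no slack needed, because the bonus constant $10=4\cdot\frac52$ was chosen to match.
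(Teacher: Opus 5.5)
Your proposal is correct and matches the paper's proof step for step: local Cauchy--Schwarz with the first bound of \cref{lem:corrected-estimator}, the distance comparison \eqref{eq:barrier-distance-comparison} at $x=\mu_k$ using $\nabla F_{\cO_k}(\mu_k)=-\eta Z_{k-1}$ and $\nabla^2F_{\cO_k}(\mu_k)=\Sigma_k^{-1}$, and exact cancellation of the comparator-dependent term by the $\cF_k$-measurable bonus $B_k$. Your limiting argument for boundary $u\in\cO_k$, which relies on both sides being affine in $u$, is a small point of extra care that the paper omits.
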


\begin{proof}[Proof of \cref{lem:bias-bonus-compensation}]
By \eqref{eq:partition-derivatives} and \eqref{eq:barrier-duality},
\begin{equation*}
 \nabla F_{\cO_k}(\mu_k)=-\eta Z_{k-1},
 \qquad
 \nabla^2F_{\cO_k}(\mu_k)=\Sigma_k^{-1},
\end{equation*}
and hence
$B_k=-10p\varepsilon_k\nabla F_{\cO_k}(\mu_k)/c_0$.
Local Cauchy--Schwarz, \cref{eq:robust-bias-leverage-detail}, and
\cref{eq:barrier-distance-comparison} give
\begin{align*}
 \langle\mu_k-u,\theta_k-\E_k[\widehat\theta_k]\rangle
 &\le\frac{5p\varepsilon_k}{2c_0}
   \|\mu_k-u\|_{\nabla^2F_{\cO_k}(\mu_k)}\\
 &\le\frac{10p\varepsilon_k}{c_0}
   \langle\mu_k-u,\nabla F_{\cO_k}(\mu_k)\rangle
   +\frac{60p^2\varepsilon_k}{c_0}\\
 &=-\langle\mu_k-u,B_k\rangle
   +\frac{60p^2\varepsilon_k}{c_0}.
\end{align*}
Since $B_k$ is $\cF_k$-measurable,
\begin{equation*}
 \langle\mu_k-u,\theta_k\rangle
 \le\E_k[\langle\mu_k-u,\widehat\theta_k-B_k\rangle]
   +\frac{60p^2\varepsilon_k}{c_0},
\end{equation*}
which proves \eqref{eq:corrected-bias-cancellation}.
\end{proof}

\subsection{Overall regret analysis}
\label{sec:outer-inputs}
\label{sec:good-event}

Let $\cF_k$ be the information available after preparation of episode
$k$ and before the policy draw, and write
$\E_k[\cdot]=\E[\cdot\mid\cF_k]$. Define $\chi_k$ to equal one if all
Bellman-confidence and moment-estimation events have held through this
preparation. Once it becomes zero, it remains zero. Thus $\chi_k$ is
$\cF_k$-measurable and, by \cref{thm:bellman-confidence}, the
moment-estimation guarantee, and a union bound over all calls,
\begin{equation}
 \Pp(\chi_k=1\text{ for every }k\in[K])\ge1-\delta.
 \label{eq:all-prefix-valid}
\end{equation}
where $\delta=2K^{-2}$. Whenever $\chi_k=1$,
$\cK_\zeta\subseteq\cO_k\subseteq R\Ball_2^p$ and the moment and
compiler guarantees hold. Write
$\xbar_k=\E_k[\phi^{\pi_k}]$ and set the auxiliary quantities to zero
when $\chi_k=0$.

\begin{lemma}[Cumulative estimated loss on nested regions]
\label{lem:nested-partition-bound}
Let $\cO_1\supseteq\cdots\supseteq\cO_K$ satisfy
$\cK_\zeta\subseteq\cO_k\subseteq R\Ball_2^p$. Let
$(\mu_k,\Sigma_k)$ be the exact mean and covariance of the density
proportional to $e^{-\eta Z_{k-1}^\top x}$ on $\cO_k$, where $Z_0=0$
and $Z_k=Z_{k-1}+\widetilde\theta_k$. If
$\eta\|\widetilde\theta_k\|_{\Sigma_k}\le1/2$ for every $k$, then for
each $u\in\cK$,
\[
 \sum_{k=1}^K\langle\mu_k-u,\widetilde\theta_k\rangle
 \le\frac{p\log(R/\zeta)}\eta
       +\eta\sum_{k=1}^K\|\widetilde\theta_k\|_{\Sigma_k}^2.
\]
\end{lemma}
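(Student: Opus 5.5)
The plan is the standard potential-function analysis of continuous exponential weights, with the log-partition function $\Phi_{\cO}$ of \cref{eq:entropic-barrier-def} as the potential. I will exploit two facts. First, $\Phi_{\cO}$ is monotone in the domain. Second, its second-order behaviour is controlled by \eqref{eq:selfconcordant-taylor}.

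Write $v_k=-\eta Z_k$, so $v_k=v_{k-1}-\eta\widetilde\theta_k$. By \eqref{eq:partition-derivatives}, $\mu_k=\nabla\Phi_{\cO_k}(v_{k-1})$ and $\Sigma_k=\nabla^2\Phi_{\cO_k}(v_{k-1})$.

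\textbf{Step 1 (one-step bound).} Apply \eqref{eq:selfconcordant-taylor} with $v=v_{k-1}$ and $\delta v=-\eta\widetilde\theta_k$. Here $r_k=\eta\|\widetilde\theta_k\|_{\Sigma_k}\le 1/2$. This gives
\[
\Phi_{\cO_k}(v_k)-\Phi_{\cO_k}(v_{k-1})\le -\eta\langle\mu_k,\widetilde\theta_k\rangle + \bigl(-r_k-\log(1-r_k)\bigr).
\]
Next I check the elementary inequality $-r-\log(1-r)\le r^2$ for $r\in[0,1/2]$. It follows from the series $\sum_{n\ge2}r^n/n\le \tfrac{r^2}{2}\cdot\tfrac{1}{1-r}\le r^2$. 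Hence the second-order term is at most $\eta^2\|\widetilde\theta_k\|_{\Sigma_k}^2$.

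\textbf{Step 2 (nesting and telescoping).} Since $\cO_{k+1}\subseteq\cO_k$ and the integrand is positive, $\Phi_{\cO_{k+1}}(v_k)\le\Phi_{\cO_k}(v_k)$. Summing Step 1 over $k=1,\dots,K$ and inserting these monotonicity inequalities between consecutive terms gives
\[
\Phi_{\cO_K}(v_K)-\Phi_{\cO_1}(0)\le-\eta\sum_{k}\langle\mu_k,\widetilde\theta_k\rangle+\eta^2\sum_k\|\widetilde\theta_k\|_{\Sigma_k}^2 .
\]
The domain shrinks exactly when it is allowed to, so the changing sets cost nothing. I expect this step to need the most care: the inequality must go in the right direction at each time index, with the potential at round $k$ evaluated on $\cO_k$ and then compared to $\cO_{k+1}$.

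\textbf{Step 3 (boundary terms).} For the upper endpoint, $\Phi_{\cO_1}(0)=\log\vol(\cO_1)\le\log\vol(R\Ball_2^p)$.

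For the lower endpoint, fix $u\in\cK$. Then $u+\zeta\Ball_2^p\subseteq\cK_\zeta\subseteq\cO_K$. Restricting the integral to this ball and applying Jensen's inequality to the uniform distribution on the ball (whose mean is $u$) gives
\[
\Phi_{\cO_K}(v_K)\ge\log\vol(\zeta\Ball_2^p)-\eta\langle Z_K,u\rangle .
\]
Since $Z_K=\sum_k\widetilde\theta_k$, the term $-\eta\langle Z_K,u\rangle$ supplies the comparator. Combining with Step 2 yields
\[
\eta\sum_k\langle\mu_k-u,\widetilde\theta_k\rangle\le\log\frac{\vol(R\Ball_2^p)}{\vol(\zeta\Ball_2^p)}+\eta^2\sum_k\|\widetilde\theta_k\|_{\Sigma_k}^2 .
\]
The volume ratio equals $p\log(R/\zeta)$. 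Dividing by $\eta$ gives the claim.
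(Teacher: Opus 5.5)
Your proposal is correct and follows essentially the same argument as the paper. Both use the self-concordant Taylor bound for the one-step change of $\Phi_{\cO_k}$ at $-\eta Z_{k-1}$, then $\Phi_{\cO_{k+1}}\le\Phi_{\cO_k}$ to telescope across the nested domains, and finally the bounds $\Phi_{\cO_1}(0)\le\log\vol(R\Ball_2^p)$ and Jensen on $u+\zeta\Ball_2^p\subseteq\cO_K$ at the endpoints; your series check of $-r-\log(1-r)\le r^2$ on $[0,1/2]$ fills in a step the paper only asserts.
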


\begin{proof}
Apply \cref{eq:selfconcordant-taylor} at $v=-\eta Z_{k-1}$ with
increment $-\eta\widetilde\theta_k$. Since
$r=\eta\|\widetilde\theta_k\|_{\Sigma_k}\le1/2$ and
$-r-\log(1-r)\le r^2$,
\[
 \Phi_{\cO_k}(-\eta Z_k)-\Phi_{\cO_k}(-\eta Z_{k-1})
 \le-\eta\langle\mu_k,\widetilde\theta_k\rangle
       +\eta^2\|\widetilde\theta_k\|_{\Sigma_k}^2.
\]
Since $\cO_{k+1}\subseteq\cO_k$, by
\cref{eq:entropic-barrier-def} we have
$\Phi_{\cO_{k+1}}\le\Phi_{\cO_k}$, and thus
\[
\begin{aligned}
 \Phi_{\cO_K}(-\eta Z_K)-\Phi_{\cO_1}(0)
 &=\sum_{k=1}^K
   [\Phi_{\cO_k}(-\eta Z_k)-\Phi_{\cO_k}(-\eta Z_{k-1})]\\
 &\quad+\sum_{k=1}^{K-1}
   [\Phi_{\cO_{k+1}}(-\eta Z_k)-\Phi_{\cO_k}(-\eta Z_k)]\\
 &\le-\eta\sum_{k=1}^K
   \langle\mu_k,\widetilde\theta_k\rangle
   +\eta^2\sum_{k=1}^K\|\widetilde\theta_k\|_{\Sigma_k}^2.
\end{aligned}
\]
Finally, $\Phi_{\cO_1}(0)\le\log\vol(R\Ball_2^p)$ and Jensen on
$u+\zeta\Ball_2^p\subseteq\cO_K$ give
\[
 \int_{\cO_K}e^{-\eta Z_K^\top x}\,dx
 \ge e^{-\eta Z_K^\top u}\vol(\zeta\Ball_2^p).
\]
Using $Z_K=\sum_k\widetilde\theta_k$ yields
\begin{equation}
 \sum_{k=1}^K\langle\mu_k-u,\widetilde\theta_k\rangle
 \le\frac{p\log(R/\zeta)}\eta
       +\eta\sum_{k=1}^K\|\widetilde\theta_k\|_{\Sigma_k}^2.
 \label{eq:robust-active-potential}
\end{equation}
\end{proof}

\begin{proof}[Proof of \cref{prop:paper-master}]
Define $\mathcal E_K
 :=\E\left[\sum_{k=1}^K\chi_k\varepsilon_k^2\right]$.
On the valid prefix, \cref{lem:bias-bonus-compensation} gives
\begin{equation}
 \langle\mu_k-u,\theta_k\rangle
 \le\E_k\left[\langle\mu_k-u,\widetilde\theta_k\rangle\right]
    +\frac{60p^2\varepsilon_k}{c_0}.
 \label{eq:main-bias-compensation}
\end{equation}
Furthermore,
\begin{equation}
 \xbar_k-\mu_k=(\widehat\mu_k-\mu_k)
 +\frac1{2p}\sum_i(x_{k,i}-Y_{k,i}).
 \label{eq:mean-displacement-decomposition}
\end{equation}
Since $p=dH$, the RMS approximation and moment bounds imply
\begin{equation}
 \langle\xbar_k-u,\theta_k\rangle
 \le\E_k\left[\langle\mu_k-u,\widetilde\theta_k\rangle\right]+e_k,
 \qquad
 e_k:=\left(\frac{60p^2}{c_0}+\sqrt p\right)\varepsilon_k
      +2R\sqrt p\,\delta_\mu.
 \label{eq:main-one-round-comparison}
\end{equation}

Write
$r_k=\langle\phi^{\pi_k}-u,\theta_k\rangle$. Since $\chi_k$ is
$\cF_k$-measurable,
\begin{equation}
 \E\left[\sum_{k=1}^K\chi_kr_k\right]
 =\E\left[\sum_{k=1}^K\chi_k
   \langle\xbar_k-u,\theta_k\rangle\right].
 \label{eq:masked-regret-mean}
\end{equation}
Multiplying \cref{eq:main-one-round-comparison} by $\chi_k$ and taking
expectations gives
\begin{equation}
 \E\left[\sum_{k=1}^K\chi_kr_k\right]
 \le
 \E\left[\sum_{k=1}^K\chi_k
   \langle\mu_k-u,\widetilde\theta_k\rangle\right]
 +\E\left[\sum_{k=1}^K\chi_ke_k\right].
 \label{eq:masked-regret-comparison}
\end{equation}
Here we used the tower property, since
$\chi_k(\mu_k-u)$ is $\cF_k$-measurable.

By \cref{lem:corrected-estimator},
\[
 \norm{\widetilde\theta_k}_{\Sigma_k}^2
 \le\frac{4H^2p^2}{c_0^2}
    +\frac{200p^3\varepsilon_k^2}{c_0^2}.
\]
On the valid prefix, $\varepsilon_k\le6\sqrt H$. Provided
$\eta\le c_0/(128p^{3/2}\sqrt H)$,
\cref{eq:robust-bias-leverage-detail} gives
\[
 \eta\norm{\widetilde\theta_k}_{\Sigma_k}
 \le\eta\!\left(\frac{\sqrt2Hp}{c_0}
                 +\frac{10p^{3/2}\varepsilon_k}{c_0}\right)
 \le\frac{60+\sqrt{2/d}}{128}
 \le\frac{60+\sqrt2}{128}<\frac12,
\]
which verifies the condition of the nested-region potential bound.
On every sample path, the indices with $\chi_k=1$ form a prefix.
Applying \cref{lem:nested-partition-bound} to this realized prefix and
then taking expectations yields
\[
 \E\left[\sum_{k=1}^K\chi_k
   \langle\mu_k-u,\widetilde\theta_k\rangle\right]
 \le\frac{p\log(R/\zeta)}\eta
   +\frac{4\eta H^2p^2K}{c_0^2}
   +\frac{200\eta p^3}{c_0^2}\mathcal E_K.
\]
Furthermore,
\[
 \E\left[\sum_k\chi_k\varepsilon_k\right]
 \le\sqrt{K\mathcal E_K}.
\]
Substitution into \cref{eq:masked-regret-comparison}, followed by
$p=dH$, gives
\[
 \E\left[\sum_{k=1}^K\chi_kr_k\right]
 \le\widetilde O\!\left(
 \frac{dH}{\eta}+\eta d^2H^4K
 +\eta d^3H^3\mathcal E_K
 +d^2H^2\sqrt{K\mathcal E_K}
 +R\sqrt p\,\delta_\mu K
 \right).
\]

Finally, $r_k\le H$, and by \cref{eq:all-prefix-valid},
\[
\begin{aligned}
 \E\left[\sum_{k=1}^Kr_k\right]
 &=\E\left[\sum_{k=1}^K\chi_kr_k\right]
   +\E\left[\sum_{k=1}^K(1-\chi_k)r_k\right]\\
 &\le\E\left[\sum_{k=1}^K\chi_kr_k\right]+HK\delta.
\end{aligned}
\]
Using $\delta_\mu=2c_0/\sqrt K$, $\delta=2/(K+1)^2$,
$R=\sqrt H+K^{-1/2}$, and $p=dH$, we have
\[
 R\sqrt p\,\delta_\mu K+HK\delta
 =O\!\left(R\sqrt{pK}+\frac HK\right)
 =O\!\left(H\sqrt{dK}\right).
\]
The unconstrained minimizer of the two $\mathcal E_K$-independent
$\eta$-terms is $(dH^3K)^{-1/2}$. To also ensure the local-norm
condition above, choose
\[
 \eta=\frac{c_0}{128}
 \min\!\left\{(dH^3K)^{-1/2},d^{-3/2}H^{-2}\right\}.
\]
The second branch is exactly the required stability cap. Moreover, the
pathwise bound $\mathcal E_K\le36HK$ gives
\[
 \eta d^3H^3\mathcal E_K
 \le O\!\left(d^{3/2}H\mathcal E_K\right)
 \le O\!\left(d^2H^2\sqrt{K\mathcal E_K}\right).
\]
Thus the $\eta\mathcal E_K$ term is absorbed by the
$d^2H^2\sqrt{K\mathcal E_K}$ term. In either branch,
$H\sqrt{dK}=O(\eta d^2H^4K)$. If $K\ge d^2H$, the first branch applies
and
\[
 \frac{dH}{\eta}+\eta d^2H^4K
 =O\!\left(d^{3/2}H^{5/2}\sqrt K\right).
\]
If $K<d^2H$, the trivial bound $\Reg_K\le HK$ is no larger than the
same expression. Therefore
\[
 \Reg_K
 \le\widetilde O\!\left(d^{3/2}H^{5/2}\sqrt K
 +d^2H^2\sqrt{K\mathcal E_K}\right).
\]
This proves \cref{prop:paper-master}.
\end{proof}

%% file: appendices/05_support.tex
\newpage
\section{Omitted Proofs in \cref{sec:paper-compiler}}
\label{sec:support-oracle}
\label{sec:transition-confidence}
\label{app:confidence}

\begin{lemma}[Averaging recursion]
\label{lem:averaging-recursion}
Let $a_0,a_1,\ldots,a_M$ and $b_1,\ldots,b_M$ be real sequences,
and let $c\ge0$. Suppose that, for every $j\in[M]$,
\[
 a_j
 \le\left(1-\frac1j\right)a_{j-1}
    +\frac{b_j}{j}+\frac{c}{j^2}.
\]
Then, for every $m\in[M]$,
\begin{equation}
 a_m\le\frac1m\sum_{j=1}^m b_j
          +\frac{c(1+\log m)}m.
 \label{eq:averaging-recursion-bound}
\end{equation}
\end{lemma}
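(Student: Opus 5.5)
The plan is to multiply the recursion by $j$ so that it telescopes. Set $A_j = j\,a_j$ with $A_0=0$. Multiplying the hypothesis by $j\ge 1$ gives
\[
 A_j = j\,a_j \le (j-1)a_{j-1} + b_j + \frac{c}{j} = A_{j-1} + b_j + \frac{c}{j}.
\]
At $j=1$ the coefficient $(1-1/j)$ vanishes, so $a_0$ never appears. The resulting inequality $A_1\le b_1+c$ matches the general form with $A_0=0$.

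We then sum the inequality from $j=1$ to $m$. The left side telescopes and gives
\[
 m\,a_m = A_m \le \sum_{j=1}^m b_j + c\sum_{j=1}^m \frac1j .
\]
Next we apply the harmonic bound $\sum_{j=1}^m 1/j \le 1+\int_1^m dt/t = 1+\log m$. Since $c\ge0$, replacing the harmonic sum by this upper bound preserves the inequality. Dividing by $m\ge1$ then yields \eqref{eq:averaging-recursion-bound}.

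The argument has no real obstacle. The two points to check are that the $j=1$ step kills the $a_0$ term, so no sign or size assumption on $a_0$ is needed, and that the harmonic-sum estimate is used in the right direction, which holds because $c\ge0$. Note also that the $b_j$ may be negative: they enter linearly and are never bounded, so this causes no difficulty.
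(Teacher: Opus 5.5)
Your proposal is correct and follows the paper's own proof: multiply the recursion by $j$, telescope the sum from $j=1$ to $m$, and bound the harmonic sum by $1+\log m$. Your added remarks, that the $j=1$ step eliminates $a_0$ and that $c\ge0$ makes the harmonic bound go in the right direction, are accurate details that the paper leaves implicit.
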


\begin{proof}
Multiplying the recursion by $j$ gives
\[
 ja_j\le(j-1)a_{j-1}+b_j+\frac{c}{j}.
\]
Summing from $j=1$ to $m$ telescopes the terms involving $a_j$ and
yields
\[
ma_m\le\sum_{j=1}^m b_j+c\sum_{j=1}^m\frac1j,
\]
which proves \cref{eq:averaging-recursion-bound} because
$\sum_{j=1}^m j^{-1}\le1+\log m$.
\end{proof}

\begin{theorem}[Uniform Bellman confidence (Lemma~D.4 of \cite{jin2020})]
\label{thm:bellman-confidence}
Fix $\delta\in(0,1)$ and $\epsilon\in(0,1)$. For each $h\in[H]$, let
$C\ge1$ and let $\mathcal F$ be a deterministic class of
scalar functions from $\cS_{h+1}$ to $[-C,C]$. Define the
relative covering number
\[
 N_{\mathcal F}(\epsilon)
 :=
 \mathcal N_\infty(C\epsilon,\mathcal F),
\]
where $\mathcal N_\infty$ denotes the sup-norm covering number. Assume
$N_{\mathcal F}(\epsilon)<\infty$. There is a universal constant $c_1>0$
such that, if
\begin{equation}
 \beta
 \ge c_1\left(
 \sqrt{d\log K
       +\log\!\left(\frac{HN_{\mathcal F}(\epsilon)}{\delta}\right)}
 +K\epsilon+\sqrt d\right),
 \label{eq:beta-def}
\end{equation}
then, with probability at least $1-\delta$, simultaneously for every
$k\in[K]$, $h\in[H]$, $f\in\mathcal F_{h+1}$, and
$(s,a)\in\cS_h\times\cA$,
\begin{equation}
 |(\widehat P_{k,h}-P_h)f(s,a)|
 \le  \beta C
 \norm{\phi_h(s,a)}_{\Lambda_{k,h}^{-1}}.
 \label{eq:uniform-bellman-confidence}
\end{equation}
\end{theorem}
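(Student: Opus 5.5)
The plan is the standard self-normalized-martingale-plus-covering argument of \citet{jin2020}, adapted to the relative scale $C$. Fix $h$ and $k$, and write $\cD_{k,h}=\{(u_t,s'_t)\}_{t<k}$, where $s'_t\sim P_h(\cdot\mid S_{t,h},A_{t,h})$ and $u_t=\phi_h(S_{t,h},A_{t,h})$. By linearity, $P_hf(S_{t,h},A_{t,h})=u_t^\top w_h(f)$, so $\sum_t u_tu_t^\top w_h(f)=(\Lambda_{k,h}-\bI_d)w_h(f)$. This gives the decomposition
\[
(\widehat P_{k,h}-P_h)f(s,a)=\phi_h(s,a)^\top\Lambda_{k,h}^{-1}\sum_{t<k}u_t\,\eta_t(f)\;-\;\phi_h(s,a)^\top\Lambda_{k,h}^{-1}w_h(f),
\]
where $\eta_t(f)=f(s'_t)-P_hf(S_{t,h},A_{t,h})$. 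Applying Cauchy--Schwarz in the $\Lambda_{k,h}^{-1}$ geometry to each term bounds the whole expression by $\norm{\phi_h(s,a)}_{\Lambda_{k,h}^{-1}}$ times
\[
\Bigl\|\sum_t u_t\eta_t(f)\Bigr\|_{\Lambda_{k,h}^{-1}}+\norm{w_h(f)}_{\Lambda_{k,h}^{-1}}.
\]
The second (regularization) term is at most $\norm{w_h(f)}_2\le\sqrt d\,C$, since $\Lambda_{k,h}\succeq\bI_d$ and by the normalization assumption on $\psi_h$. This produces the $\sqrt d$ contribution to $\beta$.

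For the noise term with a fixed deterministic $f\in\mathcal F$, use the filtration $\mathcal G_t$ generated by everything up to the state-action pair $(S_{t,h},A_{t,h})$ of episode $t$. Under this filtration $u_t$ is $\mathcal G_t$-measurable, and $\eta_t(f)$ is a conditionally mean-zero increment bounded in $[-2C,2C]$, hence $2C$-sub-Gaussian. The self-normalized bound of Abbasi-Yadkori et al.\ then gives, with probability $1-\delta'$ and simultaneously for all $k$,
\[
\Bigl\|\sum_{t<k}u_t\eta_t(f)\Bigr\|_{\Lambda_{k,h}^{-1}}^2\le 8C^2\Bigl(\tfrac12\log\det\Lambda_{k,h}+\log(1/\delta')\Bigr).
\]
Since $\norm{u_t}_2\le1$, we have $\log\det\Lambda_{k,h}\le d\log(1+K/d)$, which yields $C\cdot O(\sqrt{d\log K+\log(1/\delta')})$. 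Because this bound is uniform in time, no union bound over $k$ is needed.

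To pass from a fixed $f$ to the whole class, which matters because the functions actually plugged in are data-dependent, take a sup-norm $C\epsilon$-net $\mathcal N$ of $\mathcal F$ of size $N_{\mathcal F}(\epsilon)$ and union bound over $\mathcal N$ and $h\in[H]$ with $\delta'=\delta/(HN_{\mathcal F}(\epsilon))$. For arbitrary $f$, choose $f'\in\mathcal N$ with $\norm{f-f'}_\infty\le C\epsilon$. Then $|\eta_t(f)-\eta_t(f')|\le2C\epsilon$, and the triangle inequality with $\norm{u_t}_{\Lambda_{k,h}^{-1}}\le1$ gives
\[
\Bigl\|\sum_t u_t(\eta_t(f)-\eta_t(f'))\Bigr\|_{\Lambda_{k,h}^{-1}}\le 2KC\epsilon.
\]
Collecting the three contributions, the noise term is at most $C\,O(\sqrt{d\log K+\log(HN_{\mathcal F}(\epsilon)/\delta)}+K\epsilon)$ and the regularization term is at most $\sqrt d\,C$. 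Together these give \eqref{eq:uniform-bellman-confidence} whenever $\beta$ satisfies \eqref{eq:beta-def} with a suitable universal $c_1$.

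The step I expect to need the most care is the martingale setup. One has to check that the filtration ordering makes $u_t$ predictable and $\eta_t$ a martingale difference even though policies are chosen adaptively. One must also make sure that the only data-dependence of $f$ enters through the covering step, which is why $\mathcal F$ is required to be deterministic and the net is fixed in advance.
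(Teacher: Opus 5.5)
Your proposal is correct and follows essentially the same route as the paper. The paper gives no proof of its own but imports Lemma~D.4 of \citet{jin2020}, and that argument is exactly your combination of three pieces: the self-normalized bound for a fixed deterministic function, a union bound over a sup-norm $C\epsilon$-net (which produces the $K\epsilon$ term), and the ridge-bias term $\phi_h(s,a)^\top\Lambda_{k,h}^{-1}w_h(f)$, controlled by $\Lambda_{k,h}\succeq\bI_d$ and $\norm{w_h(f)}_2\le\sqrt d\,C$ (the $\sqrt d$ term in \eqref{eq:beta-def}). The only bookkeeping you skip is that $\cD_{k,h}$ omits fallback episodes, which merely multiplies $u_t$ by a predictable indicator and leaves the argument unchanged.
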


\newcommand{\const}{\mathsf{const}}
\begin{proof}[Proof of \cref{thm:certified-lmo}]

All three tasks in Algorithm~\ref{alg:lmo} are based on least-squares
value iteration. The key is to show \eqref{eq:uniform-bellman-confidence}
for all three tasks, each corresponding to a different function class
$\cF$. Fix a layer $h$, write
$\alpha=(\alpha_1,\alpha_2,\alpha_3)$, and define
\begin{align*}
 \nu_h(s,a;\Lambda)
 &=\sqrt{\phi_h(s,a)^\top\Lambda^{-1}\phi_h(s,a)} \\ 
 q_h(s,a;\Lambda,\vartheta,\alpha)
 &=\CLIP_{[\alpha_1,\alpha_2]}\!\left(
 \phi_h(s,a)^\top\vartheta
 +\alpha_3\nu_h(s,a;\Lambda)\right),\\  
 \pi_h(a\mid s;\Lambda,\widetilde\vartheta)
 &\propto\exp\!\left(
 -q_h(s,a;\Lambda,\widetilde\vartheta,\alpha^{\pl})/\tau\right),\\
 v_h(s;\Lambda,\vartheta,\widetilde\vartheta,\alpha)
 &=\sum_a\pi_h(a\mid s;\Lambda,\widetilde\vartheta)
 q_h(s,a;\Lambda,\vartheta,\alpha),
\end{align*}
where
\[
 \alpha^{\pl}=(-\sqrt H,\sqrt H,-(\sqrt H+1)\beta),\qquad
 \alpha^{\feat}=(-1,1,0),\qquad
 \alpha^{\rm unc}=(0,H-h+1,1+H\beta).
\]
The function classes for the three tasks are respectively
\begin{align*}
 &s\mapsto-\tau
 \log\sum_a \exp\!\left(
 -q_h(s,a;\Lambda,\vartheta,\alpha^{\pl})/\tau\right)
 \tag{planning} \\ 
 &s\mapsto v_h(s;\Lambda,\vartheta,\widetilde\vartheta,\alpha^{\feat})
 \tag{feature evaluation}\\
 &s\mapsto v_h(s;\Lambda,\vartheta,\widetilde\vartheta,\alpha^{\rm unc})
 \tag{uncertainty evaluation}.
\end{align*}
Denote these classes by $\mathcal F_h^{\pl}$,
$\mathcal F_h^{\feat}$, and $\mathcal F_h^{\rm unc}$, respectively,
and set their terminal classes to $\{0\}$, with terminal scale one.

\paragraph{Bounding the covering number of each classes. }
\mbox{Clipping and $\tau\log A\le1$ give the uniform bounds}
\begin{align}
 f\in\mathcal F_h^{\pl}&:~~\norm{f}_\infty\le\sqrt H+1, \nonumber \\ 
 f\in\mathcal F_h^{\feat}&:~~\norm{f}_\infty\le1, \nonumber  \\ 
 f\in\mathcal F_h^{\rm unc}&:~~\norm{f}_\infty\le H-h+1. \label{eq:function-scales}
\end{align}
Write
$C_h^{\pl}=\sqrt H+1$, $C_h^{\feat}=1$, and
$C_h^{\rm unc}=H-h+1$ for these scales.
Therefore,  
\[
 \left\|\Lambda_{k,h}^{-1}
 \sum_{(u,s')\in\cD_{k,h}}u f(s')\right\|_2
 \le\sum_{(u,s')\in\cD_{k,h}}\norm u_2|f(s')|
 \le3KH.
\]
Including the immediate reward/cost gives
$\|\vartheta\|_2,\|\widetilde\vartheta\|_2\le3KH+1$. Also, $\mathbf{I}_d\preceq \Lambda_{k,h}\preceq (K+1)\mathbf{I}_d$. 
Thus, each class has at most
\[
 n=\frac{d(d+1)}2+2d\le3d^2
\]
scalar parameters, and each of them is bounded in absolute value by
$\overline{R}=\max\{K+1,3KH+1\}$.
Moreover,
\[
 |\nu_h(s,a;\Lambda)-\nu_h(s,a;\Lambda')|
 \le\frac12\norm{\Lambda-\Lambda'}_{\rm F},
\]
so clipping gives, for each fixed $\alpha$,
\[
 \norm{q_h(\cdot;\Lambda,\vartheta,\alpha)
       -q_h(\cdot;\Lambda',\vartheta',\alpha)}_\infty
 \le \norm{\vartheta-\vartheta'}_2
    +\frac{|\alpha_3|}{2}\norm{\Lambda-\Lambda'}_{\rm F}.
\]
The softmax map satisfies
\[
 \sup_s\norm{\pi_h(\cdot\mid s;q)-\pi_h(\cdot\mid s;q')}_1
 \le\frac2\tau\norm{q-q'}_\infty.
\]
Hence, if $C$ denotes the corresponding scale in
\eqref{eq:function-scales},
\[
 \norm{v_h-v_h'}_\infty
 \le \norm{q_h-q_h'}_\infty
   + C\sup_s\norm{\pi_h(\cdot\mid s)-\pi_h'(\cdot\mid s)}_1.
\]
The planning soft minimum is nonexpansive in sup norm. Since
$C\ge1$, $|\alpha_3|/C\le1+H\beta$, and
$(\sqrt H+1)\beta\le2H\beta$, the evaluation is $L$-Lipschitz with respective to its parameters, with 
\[
 L=\frac{8(1+H\beta)}{\tau}.
\]
A parameter grid of mesh $\epsilon/(nL)$ gives a
$C\epsilon$-cover. Therefore,
\begin{equation}
 \log\mathcal N_\infty
 \bigl(C\epsilon,\mathcal F\bigr)
 =O\!\left(
 n\log\!\left(1+\frac{\overline RnL}{\epsilon}\right)
 \right).
 \label{eq:continuation-covering-number}
\end{equation}

\paragraph{Bounding error.}
Take
\[
 \delta=K^{-2},\qquad
 \epsilon_{\rm net}=K^{-2},\qquad
 \beta=C_\beta d\sqrt{\log(KdHA)}.
\]
Since $\tau^{-1}=H\sqrt K\log(eA)$,
\eqref{eq:continuation-covering-number} gives
\[
 \log N_{\mathcal F^t}(\epsilon_{\rm net})
 =O\!\left(d^2\log(KdHA)\right).
\]
Thus a sufficiently large universal $C_\beta$ makes
\eqref{eq:beta-def} hold, and $\beta=\widetilde O(d)$. Applying
\cref{thm:bellman-confidence} directly to the three classes, with
failure probability $\delta/3$ each, and taking a union bound
yields
\begin{align}
 |(\widehat P_{k,h}-P_h)V_{h+1}^{\pl}(s,a)|
 &\le(\sqrt H+1)\beta\nu_{k,h}(s,a),  \label{eq: good even pl}\\
 |(\widehat P_{k,h}-P_h)V_{h+1}^{\feat,m}(s,a)|
 &\le\beta\nu_{k,h}(s,a),\qquad m\in[p],   \label{eq: good even eval}\\
 |(\widehat P_{k,h}-P_h)\widehat U_{h+1}(s,a)| 
 &\le H\beta\nu_{k,h}(s,a).   \label{eq: good even}
\end{align}

Fix one call and let $\pi$ be its returned policy. Write
\[
 U_k^\pi
 =\E_{P,\pi}\!\left[
   \sum_{h=1}^H\nu_{k,h}(S_h,A_h)
 \right].
\]

By backward induction \citep{jin2020} on the update in Line~\ref{line:support-planning} with confidence bound \eqref{eq: good even pl}, and that $V_h^{\pl}(s)\le\min_aQ_h^{\pl}(s,a)$, we have $V_h^{\pl}\le \min_{\pi}V_h^{\pi}$. Therefore, 
\begin{equation}
 V_1^{\pl}(s_1)\le\min_{u\in\cK}\bar c^\top u.
 \label{eq:planning-lower-bound}
\end{equation}
Likewise, backward induction on the update in Line~\ref{line:support-uncertainty-backup} with confidence bound \eqref{eq: good even} gives
\begin{equation}
 U_k^\pi\le\widehat U_1(s_1).
 \label{eq:uncertainty-lower-bound}
\end{equation}

Similarly, backward induction on the update in Line~\ref{line:support-feature-average}
with coordinatewise confidence bounds \eqref{eq: good even eval} gives
\begin{equation}
 \|\widehat\phi-\phi^\pi\|_2
 \le\sqrt p\,\beta U_k^\pi \leq \sqrt{p}\beta \widehat U_1(s_1).
 \label{eq:feature-evaluation-bound}
\end{equation}
The feature-estimation error $\|\widehat\phi-\phi^\pi\|_2$ is at most $2\sqrt H$ because
$\widehat\phi,\phi^\pi\in\sqrt H\Ball_2^p$.
Thus, 
\begin{align*}
      \|\widehat\phi-\phi^\pi\|_2
 \le \min\left\{
5\sqrt p\,\beta\widehat U_1(s_1)+H\tau\log A,\;
2\sqrt H
\right\} = \Delta
\end{align*}
proving the first bound in \eqref{eq:two-bound}.

Comparing the planning recursion with the true value of the returned
Gibbs policy, each layer contributes at most
$2(\sqrt H+1)\beta\nu_{k,h}$ from transition error and at most
$\tau\log A$ from the soft minimum. Hence
\begin{equation}
 \bar c^\top\phi^\pi-V_1^{\pl}(s_1)
 \le2(\sqrt H+1)\beta U_k^\pi+H\tau\log A.
 \label{eq:planning-gap-bound}
\end{equation}

The same uncertainty recursion also gives
\begin{equation}
 \widehat U_1(s_1)\le2\beta H U_k^\pi.
 \label{eq:uncertainty-upper-bound}
\end{equation}

Combining
\eqref{eq:feature-evaluation-bound}, \eqref{eq:planning-lower-bound}, \eqref{eq:planning-gap-bound}
and using $p=dH$ yields
\begin{align}
 \bar c^\top\widehat\phi-\min_{u\in\cK}\bar c^\top u \le 5\sqrt p\,\beta U_k^\pi+H\tau\log A \leq 5\sqrt p\,\beta \widehat U_1(s_1)+H\tau\log A.
\label{eq:unclipped-linear-optimization}
\end{align}
The normalized optimization gap
$\bar c^\top\widehat\phi-\min_{u\in\cK}\bar c^\top u$ is at most $2\sqrt H$ because
$\widehat\phi \in\sqrt H\Ball_2^p$,
$\cK\subseteq\sqrt H\Ball_2^p$, and $\|\bar c\|_2\le1$.
Thus,
\begin{align*}
    \bar c^\top\widehat\phi-\min_{u\in\cK}\bar c^\top u \le \min\left\{
5\sqrt p\,\beta\widehat U_1(s_1)+H\tau\log A,\;
2\sqrt H
\right\} = \Delta
\end{align*}
proving the second bound in \eqref{eq:two-bound}.

Finally,
\eqref{eq:uncertainty-upper-bound} gives
\[
 \Delta
 \le10\sqrt p\,\beta^2H U_k^\pi+H\tau\log A,
\]
proving the final bound.
\end{proof}

%% file: appendices/07_computation.tex
\newpage
\section{Omitted Proofs in \cref{sec:paper-computation}}
\label{sec:computation}
\label{app:computation-details}

\subsection{Moment Estimates}
\label{app:moment-details}

We let \Alg{MomentEstimate} to be a procedure described in \cite{ito2019oracle}, which takes a convex function $f: \R^p\to \R$ and a convex body $\mathcal{P}\subset \R^p$ as input, and output estimators of the mean and covariance of the distribution $q(x)\propto \exp(-f(x))\mathbf{1}\{x\in\mathcal{P}\}$ within polynomial time. In our case $f(x) = \eta x^\top Z_{t-1}$ and $\mathcal{P}=\cO$. Its guarantee is the following: 
\begin{lemma}[Corollary~1 of \cite{ito2019oracle}] Let $\mathcal{P}\subset \R^p$ be a convex body with non-zero Lubesgue measure, and let $q(x)\propto \exp(-f(x))\ind\{x\in \mathcal{P}\}$ be a log-concave distribution with some convex function $f: \mathcal{P}\to \R$. Let $\mu$ and $\Sigma$ denote the mean and covariance of $q$. Then there is an algorithm that outputs a vector $\widehat\mu\in\R^p$ and a symmetric matrix $\widehat\Sigma\in\R^{p\times p}$ satisfying $\frac{1}{2}\Sigma \preceq \widehat\Sigma \preceq 2\Sigma$ and $\|\widehat\mu - \mu\|_{\mathrm{Cov}(q)^{-1}}\leq \varepsilon$ with probability at least $1-\delta$. The computational time, the number of calls for the membershpi oracle of $\mathcal{P}$, and the value oracle of $f$ are bounded by $\poly(p,\frac{1}{\varepsilon}, \log\frac{1}{\delta})$.  
\end{lemma}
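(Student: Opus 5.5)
The plan is to reduce the statement to two standard ingredients from log-concave sampling: an approximate sampler for $q$ that uses only the membership oracle of $\mathcal{P}$ and the value oracle of $f$, and a concentration bound for the empirical mean and covariance of log-concave vectors. The algorithm draws $N$ approximately independent samples $X_1,\ldots,X_N$ from $q$ and returns
\[
\widehat\mu=\frac1N\sum_{n=1}^N X_n,\qquad \widehat\Sigma=\frac1N\sum_{n=1}^N (X_n-\widehat\mu)(X_n-\widehat\mu)^\top .
\]
Since both target guarantees, $\|\widehat\mu-\mu\|_{\Sigma^{-1}}\le\varepsilon$ and $\frac12\Sigma\preceq\widehat\Sigma\preceq2\Sigma$, are invariant under affine changes of coordinates, I will do the whole analysis in the isotropic frame $X\mapsto\Sigma^{-1/2}(X-\mu)$. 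In that frame both guarantees become statements about an isotropic log-concave vector.

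The first step is the sampler. I would use the hit-and-run (or ball-walk) chain for log-concave densities of Lov\'asz and Vempala, with a Metropolis filter on $e^{-f}$. Each step needs one membership query to $\mathcal{P}$ and a value query to $f$, and from a warm start the chain mixes to total-variation distance $\gamma$ in $\poly(p,\log(1/\gamma))$ steps provided the distribution is near-isotropic. Near-isotropy is not given a priori, so I would first run their rounding procedure, which is the second step. It anneals through the family $q_t\propto e^{-f/t}\ind\{\mathcal{P}\}$ (and, if needed, truncations of $\mathcal{P}$ to growing balls, using that $\mathcal{P}$ has nonzero measure so that some inner ball exists). At each stage it estimates the covariance from samples of the previous stage, applies the whitening transform, and uses $q_{t}$ as a warm start for $q_{t'}$ with $t'$ close to $t$. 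After polynomially many stages the current density is $O(1)$-isotropic and a warm start for $q$ is available.

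The third step is concentration. In the isotropic frame, log-concave vectors have subexponential one-dimensional marginals, and the Adamczak--Litvak--Pajor--Tomczak-Jaegermann / Rudelson bounds give the following: $N=\widetilde O(p\log(1/\delta))$ samples suffice for $\|\widehat\Sigma-\bI_p\|_{\op}\le1/2$, which yields $\frac12\Sigma\preceq\widehat\Sigma\preceq2\Sigma$ once we map back. For the mean, $\E\|\widehat\mu-\mu\|_{\Sigma^{-1}}^2=p/N$, and a subexponential tail bound gives the claim with probability $1-\delta/2$ once $N=O((p+\log(1/\delta))/\varepsilon^2)$ up to logarithms. The sampler produces only $\gamma$-approximate and weakly dependent samples, so I would absorb this by coupling. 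I take independent restarts (or samples spaced by the mixing time) and set $\gamma=\delta/(2N)$, so that with probability $1-\delta/2$ the samples coincide with exact i.i.d.\ draws. This costs only a $\log(N/\delta)$ factor in the number of steps. A union bound then gives overall probability $1-\delta$, and the total running time and oracle calls are $\poly(p,1/\varepsilon,\log(1/\delta))$.

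I expect the main obstacle to be the warm-start and rounding phase, since the mixing guarantees require the chain to start from a distribution with bounded density ratio in a roughly isotropic position. Getting there from only a membership oracle, with no explicit inner ball radius, needs the annealing schedule to keep consecutive $q_t$ within constant $L_2$-ratio of each other. I would not redo this argument. Instead I would invoke it as the log-concave sampling/rounding theorem of Lov\'asz--Vempala, exactly as \cite{ito2019oracle} do in their Section~4.2, so that the lemma reduces to the concentration computation above.
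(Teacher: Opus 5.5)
Your proposal takes essentially the same route as the paper, which gives no argument of its own and imports Corollary~1 of \cite{ito2019oracle}; that result rests on the same Lov\'asz--Vempala log-concave sampling and rounding machinery, with moments estimated from samples, that you sketch. One correction: the Lov\'asz--Vempala theorem you defer to needs an explicit starting point and radii $r\le R$, with cost polynomial in $\log(R/r)$. Mere existence of an inner ball is not enough, since a membership oracle alone cannot even locate a point of an arbitrarily small body, so you should drop the ``no explicit inner ball radius'' claim. The paper's stated bound suppresses this dependence as well, which is harmless here because $\cK_\zeta\subseteq\cO\subseteq R\Ball_2^p$ keeps $\log(R/\zeta)=O(\log(HK))$.
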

We let $\varepsilon=\delta_\mu=2c_0/\sqrt K$, which implies the estimation accuracy 
\begin{align*}
    \|\widehat\mu_k - \mu_k\|_{\Sigma_k^{-1}} \leq \delta_\mu, \qquad \frac{1}{2}\Sigma_k \preceq \widehat\Sigma_k \preceq 2\Sigma_k
\end{align*}
after calling \Alg{MomentEstimate} in Line~\ref{line: moment estimate} of \cref{alg:paper-core}. 

\subsection{Number of cuts}

\begin{lemma}[Containment of the estimated covariance ellipsoid]
\label{lem:estimated-ellipsoid-containment}
Let $q$ be a full-dimensional log-concave distribution supported on a
convex body $\cO\subseteq\R^p$, with mean $\mu$ and covariance $\Sigma$.
Suppose $\widehat\mu$ and $\widehat\Sigma$ satisfy
$
 \frac12\Sigma\preceq\widehat\Sigma\preceq2\Sigma$ and $
 \norm{\widehat\mu-\mu}_{\Sigma^{-1}}\le\frac1{12e}.
$
Then
\begin{equation}
 \widehat\mu+\frac1{16e}\widehat\Sigma^{1/2}\Ball_2^p
 \subseteq
 \mu+\frac1{2e}\Sigma^{1/2}\Ball_2^p
 \subseteq\cO.
 \label{eq:estimated-ellipsoid-containment}
\end{equation}
\end{lemma}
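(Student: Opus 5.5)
The proof splits into two inclusions. The second, $\mu+\frac1{2e}\Sigma^{1/2}\Ball_2^p\subseteq\cO$, is a known fact about log-concave distributions, and I would not reprove it. For any log-concave density on a convex body, the Dikin-type ellipsoid $\{x:\|x-\mu\|_{\Sigma^{-1}}\le 1/e\}$ lies in the support. This is the standard result, e.g. Lemma~5.13 in Lov\'asz--Vempala, also used by \citet{ito2019oracle} for the same purpose. Radius $1/(2e)$ is then immediate. I would cite it, noting that $q$ is full-dimensional, so $\Sigma\succ0$ and the ellipsoid is well defined.

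For the first inclusion, take any $x=\widehat\mu+\frac1{16e}\widehat\Sigma^{1/2}w$ with $\|w\|_2\le1$. It suffices to bound $\|x-\mu\|_{\Sigma^{-1}}$ by the triangle inequality:
\[
\|x-\mu\|_{\Sigma^{-1}}\le\|\widehat\mu-\mu\|_{\Sigma^{-1}}+\tfrac1{16e}\|\widehat\Sigma^{1/2}w\|_{\Sigma^{-1}}.
\]
The first term is at most $\frac1{12e}$ by assumption. For the second term, $\|\widehat\Sigma^{1/2}w\|_{\Sigma^{-1}}^2=w^\top\widehat\Sigma^{1/2}\Sigma^{-1}\widehat\Sigma^{1/2}w$. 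From $\widehat\Sigma\preceq2\Sigma$ we get $\Sigma^{-1/2}\widehat\Sigma\Sigma^{-1/2}\preceq 2\bI_p$. This matrix has the same nonzero spectrum as $\widehat\Sigma^{1/2}\Sigma^{-1}\widehat\Sigma^{1/2}$, so the latter is also $\preceq2\bI_p$. Hence the second term is at most $\frac{\sqrt2}{16e}$.

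The total is $\frac1e\big(\frac1{12}+\frac{\sqrt2}{16}\big)\approx\frac{0.172}{e}\le\frac1{2e}$. So $x\in\mu+\frac1{2e}\Sigma^{1/2}\Ball_2^p$, because $\|x-\mu\|_{\Sigma^{-1}}\le r$ is equivalent to $x\in\mu+r\Sigma^{1/2}\Ball_2^p$.

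The only delicate step is the matrix comparison, which is handled by the similarity and spectrum argument above, together with citing the correct log-concave containment constant $1/e$. Only the upper bound $\widehat\Sigma\preceq2\Sigma$ is used; the lower bound $\frac12\Sigma\preceq\widehat\Sigma$ is not needed for this lemma.
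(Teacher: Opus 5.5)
Your proposal is correct and follows essentially the same route as the paper: the first inclusion comes from the triangle inequality in the $\Sigma^{-1}$-norm together with $\|\Sigma^{-1/2}\widehat\Sigma^{1/2}\|_{\op}\le\sqrt2$, and the second from the standard fact that the $1/e$-radius covariance ellipsoid of a log-concave distribution lies in its support. The only difference is that the paper proves this containment rather than citing it. It argues by contradiction with a separating hyperplane, using the one-dimensional marginal facts $\Pp(S<0)\ge 1/e$ and density at most $1$ from Lemmas~5.4 and~5.5(a) of Lov\'asz--Vempala.
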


\begin{proof}
We first show that
\begin{equation}
 \mu+\frac1e\Sigma^{1/2}\Ball_2^p\subseteq\cO.
 \label{eq:exact-covariance-ellipsoid-containment}
\end{equation}
Otherwise, separation gives a nonzero vector $a$ and a point $y$ with
$\norm{y-\mu}_{\Sigma^{-1}}\le1/e$ such that
$a^\top y<\inf_{x\in\cO}a^\top x$. For $X\sim q$, denote the random scalar 
\[
 S=\frac{a^\top(X-\mu)}{\sqrt{a^\top\Sigma a}} 
\]
and denote the density function of $S$ as $q_S$. 
Then $S$ is centered, variance-one, and log-concave. Moreover, its
support is contained in $[u_{\min},\infty)$: Indeed,
strict separation and compactness of $\cO$ give
\[
 u_{\min}:=\frac{\inf_{x\in\cO}a^\top(x-\mu)}
                     {\sqrt{a^\top\Sigma a}}
 >\frac{a^\top(y-\mu)}{\sqrt{a^\top\Sigma a}}
 \ge-\frac1e,
\]
where the last inequality follows from Cauchy--Schwarz in the
$\Sigma$-norm.
Lemmas~5.4 and~5.5(a) of \citet{lovasz2007} give
$\Pp(S<0)\ge1/e$ and $q_S(s)\leq 1$ for any $s$. However, the support condition would instead imply
\[
 \Pp(S<0)
 =\int_{u_{\min}}^0 q_S(t)\,dt
 \le |u_{\min}|
 <\frac1e,
\]
a contradiction. This proves
\cref{eq:exact-covariance-ellipsoid-containment}.

Now fix $v\in\Ball_2^p$. The covariance assumption gives
$\|\Sigma^{-1/2}\widehat\Sigma^{1/2}\|_{\op}\le\sqrt2$.
Therefore
\[
\begin{aligned}
 \norm{\widehat\mu+\frac1{16e}\widehat\Sigma^{1/2}v-\mu}_{\Sigma^{-1}}
 &\le\norm{\widehat\mu-\mu}_{\Sigma^{-1}}
   +\frac1{16e}
     \norm{\Sigma^{-1/2}\widehat\Sigma^{1/2}v}_2\\
 &\le\frac1{12e}+\frac{\sqrt2}{16e}
 <\frac1{2e}.
\end{aligned}
\]
The first inclusion in \cref{eq:estimated-ellipsoid-containment}
follows, and the second follows from
\cref{eq:exact-covariance-ellipsoid-containment}.
\end{proof}

\begin{lemma}[Number of cuts]
\label{lem:local-cut-progress}
Suppose every invoked moment call, with exact mean $\mu$ and covariance
$\Sigma$, returns estimates satisfying
$\frac12\Sigma\preceq\widehat\Sigma\preceq2\Sigma$ and
$\norm{\widehat\mu-\mu}_{\Sigma^{-1}}\le1/(12e)$.
Then the number of cuts is upper bounded by
$\widetilde O(pK+\sqrt H K^{5/2})$.
\end{lemma}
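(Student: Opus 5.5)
We bound the number of cuts with a volume argument, as in \citet{ito2019oracle}. Two facts drive it. First, each cut removes a constant fraction of the current body. Second, the body can never shrink below a fixed volume, because $\cK_\zeta\subseteq\cO$ is maintained throughout (\cref{lem: first guar}, under the good event). Since $\vol(\cO)\ge\vol(\zeta\Ball_2^p)$ and initially $\vol(\cO)=\vol(R\Ball_2^p)$, the total number of cuts is at most $\log(\vol(R\Ball_2^p)/\vol(\zeta\Ball_2^p))/\log(1/(1-\gamma))$, where $\gamma$ is the fraction of volume removed per cut. This equals $p\log(R\sqrt K)/\log(1/(1-\gamma))$. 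The remaining work is to identify $\gamma$.

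\textbf{Step 1: the cut passes near the center of $q$.} A cut $(c,b)$ is produced at a target $Y=\widehat\mu+c_0\widehat\Sigma^{1/2}v$ with $c^\top Y<b$. By \cref{lem:estimated-ellipsoid-containment}, $Y\in\mu+\frac1{2e}\Sigma^{1/2}\Ball_2^p$. Hence the removed halfspace $\{x:c^\top x<b\}$ contains a point at $\Sigma^{-1}$-distance at most $1/(2e)$ from the mean $\mu$ of the log-concave density $q$ on $\cO$. Projected onto direction $c$, the standardized variable $S$ (as in that lemma's proof) satisfies $\Pp(S<-1/(2e))\ge$ some constant. This follows from the Gr\"unbaum-type bound for log-concave densities, Lemma~5.4 of \citet{lovasz2007} giving $\Pp(S<0)\ge1/e$, combined with the density bound $q_S\le1$, which yields $\Pp(S<-t)\ge 1/e-t$. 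So the cut removes $q$-mass at least $1/e-1/(2e)=1/(2e)$.

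\textbf{Step 2: converting $q$-mass to volume.} This is the main obstacle, because $q\propto e^{-\eta Z^\top x}$ is not uniform. The removed $q$-mass bounds the change in $\Phi_{\cO}(-\eta Z)$ rather than the volume: $\Phi_{\cO'}(-\eta Z)\le\Phi_{\cO}(-\eta Z)+\log(1-1/(2e))$. So I would use the potential $\Phi_{\cO}(-\eta Z_{k-1})$ in place of log-volume.

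Within an episode, $Z_{k-1}$ is fixed, and each cut decreases this potential by a constant. The potential is bounded above by $\log\vol(R\Ball_2^p)+\eta R\|Z_{k-1}\|_2$. It is bounded below, by Jensen on $u+\zeta\Ball_2^p\subseteq\cO$ as in \cref{lem:nested-partition-bound}, by $\log\vol(\zeta\Ball_2^p)-\eta R\|Z_{k-1}\|_2$.

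Across episodes, the potential changes when $Z$ updates; I would handle this by a telescoping argument. Each update shifts the potential by at most $\eta R\|\widetilde\theta_k\|_2$. Using $\|\widehat\theta_k\|_2\le\zeta^{-2}\cdot c_0\|\widehat\Sigma^{1/2}\|\cdot H$ and bounding $\|B_k\|$ similarly, each shift is $\poly$ in $\sqrt H,K$. Summing over $K$ episodes, plus the per-episode $p\log(R/\zeta)$ slack, gives a total of $\widetilde O(pK+\sqrt H K^{5/2})$. The $K^{5/2}$ term arises from $\eta R\sum_k\|Z_k\|_2\lesssim\sqrt H\,K\cdot K\cdot\zeta^{-1}$, with each $\|\widehat\theta_k\|_2\lesssim\sqrt{p}HK^{1/2}$ after using $\|\widehat\Sigma^{1/2}\|\lesssim R$. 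This is where I expect the exact bookkeeping of the powers of $K$ to need care.

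Alternatively, one can bound the cuts per episode directly: the upper and lower bounds above give a gap of $p\log(R/\zeta)+2\eta R\|Z_{k-1}\|_2$, which divided by the constant $-\log(1-1/(2e))$ yields the per-episode count. Summing this over $k$ gives the stated total.
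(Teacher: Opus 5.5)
Your approach is the paper's. You replace log-volume by the potential $\log\int_{\cO}e^{-\eta Z_{k-1}^\top x}\,dx$ and show each cut removes a $1/(2e)$ fraction of $q$-mass. You derive this from Lemmas~5.4 and~5.5(a) of \citet{lovasz2007}, where the paper cites \citet[Lemma~5]{ito2019oracle}. You then sandwich the potential between the Jensen lower bound, from a $\zeta$-ball in $\cK_\zeta\subseteq\cO$, and the crude upper bound on $R\Ball_2^p$. Summing the per-episode counts $N_k=O(p\log(R/\zeta)+\eta R\|Z_{k-1}\|_2)$ over $k$ finishes. Your closing ``alternative'' is exactly the paper's proof. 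You are also right that the lower bound needs the Bellman good event, which the statement leaves implicit.

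What you have not supplied is a polynomial bound on $\|Z_{k-1}\|_2$. Both of your routes need it, and ``bounding $\|B_k\|$ similarly'' does not give one. Since $B_k=\gamma_kZ_{k-1}$ with $\gamma_k=10\eta p\varepsilon_k/c_0$, the triangle inequality yields only $\|Z_k\|_2\le(1+\gamma_k)\|Z_{k-1}\|_2+\|\widehat\theta_k\|_2$. Iterating this picks up a factor $\prod_k(1+\gamma_k)$. With the worst-case bound $\gamma_k=O(\sqrt{d/K})$ at the tuned $\eta$, that factor is controlled only by $e^{O(\sqrt{dK})}$, which is not polynomial.

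The missing observation, which is the first line of the paper's proof, is that $Z_k=(1-\gamma_k)Z_{k-1}+\widehat\theta_k$ with $\gamma_k\in[0,1]$. Indeed, $\Delta_j\le2\sqrt H$ always, so $\varepsilon_k\le8\sqrt H$ deterministically, and the cap $\eta\le c_0/(128\sqrt{d^3H^4})$ then gives $\gamma_k<1$. Hence $\|Z_k\|_2\le\|Z_{k-1}\|_2+\|\widehat\theta_k\|_2$, so $\|Z_{k-1}\|_2\le K\max_j\|\widehat\theta_j\|_2$ and $\|B_k\|_2\le\|Z_{k-1}\|_2$. The bound $\|\widehat\theta_k\|_2\le 2RH/\zeta^2$ then finishes.

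Two smaller bookkeeping remarks. First, in your telescoping route the gap $p\log(R/\zeta)$ is paid once rather than per episode, so done correctly it yields the stronger total $\widetilde O(p+\eta R\sum_k\|\widetilde\theta_k\|_2)$. Second, the bound $\|\widehat\theta_k\|_2\le H\sqrt p/(2\zeta)$ follows from $\sup_{\sigma\ge0}\sqrt\sigma/(\zeta^2+c_0^2\sigma/p)=\sqrt p/(2c_0\zeta)$. It does not follow from $\|\widehat\Sigma^{1/2}\|_{\op}\lesssim R$, which gives only the paper's $O(RH/\zeta^2)$. Either bound suffices for the stated result.
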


\begin{proof}
By the definition of $\widehat\theta_k$, we have $\|\widehat\theta_k\|_2 \leq \frac{2RH}{\zeta^2}$. Since the update is $Z_k = (1-\frac{10\eta p\varepsilon_k}{c_0})Z_{k-1} + \widehat\theta_k$, we have $\|Z_k\|\leq \|Z_{k-1}\| + \|\widehat \theta_k\|$. By induction, it holds that $\|Z_k\|\leq \frac{2RHK}{\zeta^2}$ for all $k$.

By \cref{lem:estimated-ellipsoid-containment}, every target used during
preparation belongs to the current region $\cO$ and satisfies
$\norm{Y_i-\mu}_{\Sigma^{-1}}<1/(2e)$.
The local-cut argument of
\citet[Lemma~5]{ito2019oracle} shows that each cut removes at least a
$\frac{1}{2e}$ fraction of the current mass. Let $N_k$ be the number of new cuts added in episode $k$, and let $\cO_i, \cO_f$ be the set $\cO$ at the start and the end of episode $k$. Then we have 
\begin{align*}
 \vol(\zeta\Ball_2^p)e^{-\eta R\|Z_{k-1}\|}
 &\le \int_{\cO_f} e^{-\eta x^\top Z_{k-1}} dx  \\ 
 &\le \left(1-\frac{1}{2e}\right)^{N_k} \int_{\cO_i} e^{-\eta x^\top Z_{k-1}} dx  \\ 
 &\leq \left(1-\frac{1}{2e}\right)^{N_k}
       \vol(R\Ball_2^p)e^{\eta R\|Z_{k-1}\|}. 
\end{align*}
Taking logarithms gives
\[
 N_k 
 \le O\left(p\log(R/\zeta)+\eta R\|Z_{k-1}\|\right)
 \le O\left(p\log(R/\zeta)+\frac{\eta R^2 KH}{\zeta^2}\right)
 \leq \widetilde{O}\big(p+\sqrt H K^{3/2}\big).
\]
Summing over episodes gives the desired bound. 
\end{proof}

\subsection{Implementation of \cref{alg:lmo}}
\label{app:complexity}

Although \cref{alg:lmo} defines value functions on the entire
state space, it can be executed without enumerating states or
state--action pairs. Below is an efficient way to implement \cref{alg:lmo}. We just use the computation of $V_h^{\pl}$ as example; the other two tasks follow the same manner. 

For any function $f:\cS_{h+1}\to\R$, define: 
\[
 \widehat w_h(f):=\Lambda_{k,h}^{-1}\sum_j u_j f(s'_j)^\top
 \in\R^{d\times m}.
\]
Then, for every queried pair $(s,a)$,
\[
 \widehat P_{k,h}f(s,a)=\phi_h(s,a)^\top \widehat w_h(f).
\]
Below, we show that we only need to keep track of the vector 
$\widehat w_h(V_{h+1}^{\pl})$, or equivalently $\vartheta_h = \bar{c}_h + \widehat w_h(V_{h+1}^{\pl})$ during the execution of \cref{alg:lmo}. 
They are computed by backward induction.

Suppose $\vartheta_h$ has already been computed. For every
state $s'_j$ appearing in $\cD_{k,h-1}$, query
$\phi_h(s'_j,a)$ for each $a\in\cA$ and compute
\[
\begin{aligned}
 Q_h^{\pl}(s'_j,a)
 &=\CLIP_{[-\sqrt H,\sqrt H]}\!\left(
   \phi_h(s'_j,a)^\top\vartheta_h
   -(\sqrt H+1)\beta
   \norm{\phi_h(s'_j,a)}_{\Lambda_{k,h}^{-1}}
   \right),\\
 V_h^{\pl}(s'_j)
 &=-\tau\log\sum_{a\in\cA}
   \exp\!\left(-Q_h^{\pl}(s'_j,a)/\tau\right).
\end{aligned}
\]
These are the values needed to form
$\widehat w_{h-1}(V_h^{\pl})$ and hence
\[
 \vartheta_{h-1}=\bar c_{h-1}+\widehat w_{h-1}(V_h^{\pl}).
\]
Thus, $\vartheta_h$ can be computed in a backward induction manner. This then allows us to evaluate $Q_h^\pl(s,a)$ and $V_h^\pl(s)$ efficiently for any queried $(s,a)$.

%% file: appendices/06_regret.tex